\renewcommand{\b}[1]{\ensuremath{\mathbb{#1}}}
\renewcommand{\c}[1]{\ensuremath{\EuScript{#1}}}
\newcommand{\eps}{\varepsilon}
\newcommand{\supp}{\textsc{MaxMarg}\xspace}
\newcommand{\sota}{\emph{SOTA}\xspace}
\newcommand{\sod}{\emph{SOD}\xspace}
\newcommand{\sou}{\emph{SOU}\xspace}
\newcommand{\sol}{\emph{SOL}\xspace}
\newcommand{\done}{\textsc{Data1}\xspace}
\newcommand{\dtwo}{\textsc{Data2}\xspace}
\newcommand{\dthr}{\textsc{Data3}\xspace}
\newcommand{\naiv}{\textsc{Naive}\xspace}
\newcommand{\vote}{\textsc{Voting}\xspace}
\newcommand{\rand}{\textsc{Random}\xspace}
\newcommand{\supo}{\textsc{MaxMarg}\xspace}
\newcommand{\oalgo}{\textsc{Median}\xspace}
\newcommand{\csup}{\textsc{Support}\xspace}
\newcommand{\itsupp}{\textsc{IterativeSupports}\xspace}
\numberwithin{equation}{section}
\newenvironment{packeditemize}{
\begin{itemize}
  \setlength{\itemsep}{1pt}
  \setlength{\parskip}{0pt}
  \setlength{\parsep}{0pt}
}{\end{itemize}}
\newcommand{\delete}[1] {}
\definecolor{ForestGreen}{rgb}{0.06, 0.8, 0.06}
\definecolor{PrettyPurple}{rgb}{0.4, 0.0, 1.0}
\begin{document}

\title{Protocols for Learning Classifiers on Distributed Data}

%\author{\name Hal Daum\'e III \email hal@umiacs.umd.edu \\
%       \addr Department of Computer Science\\
%       University of Maryland\\
%       College Park, MD, USA 
%       \AND
%       \name Jeff M. Phillips \email jeffp@cs.utah.edu \\
%       \addr School of Computing\\
%       University of Utah\\
%       Salt Lake City, UT, USA
%       \AND
%       \name Avishek Saha \email avishek@cs.utah.edu \\
%       \addr School of Computing\\
%       University of Utah\\
%       Salt Lake City, UT, USA
%       \AND
%       \name Suresh Venkatasubramanian \email suresh@cs.utah.edu \\
%       \addr School of Computing\\
%       University of Utah\\
%       Salt Lake City, UT, USA}

%\editor{Leslie Pack Kaelbling}
%\editor{}

\author{Hal Daum\'e III \\ University of Maryland, CP \and Jeff M. Phillips \\ University of Utah \and Avishek Saha \\ University of Utah \and Suresh Venkatasubramanian \\ University of Utah}

\maketitle

\begin{abstract}%   <- trailing '%' for backward compatibility of .sty file
We consider the problem of learning classifiers for labeled data that has been distributed across several nodes. Our goal is to find a single classifier, with small approximation error, across all datasets while minimizing the communication between nodes. This setting models real-world communication bottlenecks in the processing of massive distributed datasets.  We present several very general sampling-based solutions as well as some two-way protocols which have a provable exponential speed-up over any one-way protocol. We focus on core problems for \emph{noiseless} data distributed across two or more nodes. The techniques we introduce are reminiscent of active learning, but rather than actively probing labels, nodes actively communicate with each other, each node simultaneously learning the important data from another node. 
\end{abstract}

\begin{keywords}
	Distributed Learning, Communication complexity, One-way/Two-way communication, Two-party/k-party protocol
\end{keywords}

%===========================================================================================================
%===========================================================================================================
\section{Introduction}
\label{sec:intro}
%\vspace{-0.25cm}

Distributed learning~\citep{langford11dist} is the study of machine learning on data distributed across
multiple locations. Examples of this setting include data gathered from sensor networks, or from data centers
located across the world, or even from different cores on a multicore architecture. In all cases, the
challenge lies in solving learning problems with minimal \emph{communication} overhead between nodes; learning algorithms cannot afford to ship all data to a central server, and must use limited communication efficiently to perform the desired tasks. 

In this paper, we introduce a framework for studying distributed classification that treats inter-node
communication as a limited resource, and present a number of algorithms for this problem that
uses inter-node interaction to reduce communication. 
% exponentially in comparison to ``one-way'' models of communication. 
Our main technique is the use of carefully chosen data and classifier descriptors that convey the most useful
information about one node to another; in that respect, our work makes use of (in spirit) the \emph{active learning} paradigm~\citep{bsettles09activesurvey}.

%Distributed classification is a typical example of this paradigm. 
For distributed classification, the dominant strategy~\citep{predd06distwireless,mcdonald10diststrucperc,mann09distmem,lazarevic01distboost}
is to design local classifiers that work well on individual nodes. These classifiers are then communicated to a central server, and then aggregation strategies like voting, averaging, or even boosting are used to compute a global classifier. 
These approaches, while designed to improve communication, do not study communication as a resource to be
used sparingly, and ignore the fact that interactions between nodes might reduce communication even further
by allowing them to learn from each others' data.  %\jeff{I think something like the follows need to be said here.  Avshek, please confirm literature: ``Furthermore, on adversarial data, these approaches lack hard guarantees on the overall error."  }

\begin{table*}[!htbp]
\centering
\begin{tabular}{cccccc} 
		{\bf Hypothesis} 	& {\bf Dimen-} & {\bf Error} & \multicolumn{2}{c}{\bf Communication Complexity} & {\bf Reference}\\
			\cline{4-5}
		{\bf Class}       & {\bf sions}  &  & {\bf Two-party} 	 & {\bf k-party} 				\\
		\hline
		\hline
		\multicolumn{6}{c}{one-way communication} \\
		\hline 
		generic    				& $d$ 	& $\eps$ & $O(\nu/\eps \log{\nu/\eps)}$ & $O(k(\nu/\eps) \log{\nu/\eps})$ 
			&  Theorem~\ref{thm:generic1way} \& \ref{thm:generickway} \\
		thresholds  			& $1$ 		& 0	& $2$                      & $2k$
			&  Lemma~\ref{lem:threshold} \& \ref{thm:k-0err} \\ 
%		intervals (Theorem~\ref{thm:intervals})   & $1$ 				& one-way 		& $4$ (constant)           &
%			&	0	& Theorem~\ref{thm:intervals} (\ref{ssec:intervals})	\\ 
		aa-rectangles  		& $d$ 			& 0 & $4d$                     & $4dk$
			& Theorem~\ref{thm:aarects}  \& \ref{thm:k-0err} \\ 
		hyperplanes  			& $d$ 				& $\eps$ & $\Omega(1/\eps)$  	     & $\Omega(k/\eps)$
			& Theorem~\ref{thm:lb-linsep} \& \ref{thm:lb-linsep-k}  \\
		%\hline
		\hline
		\multicolumn{6}{c}{two-way communication} \\
		\hline
		hyperplanes  			& $2$ 		& $\eps$ & $O(\log{1/\eps})$  	     & $O(k^2 \log{1/\eps})$
			& Theorem~\ref{thm:rounds} \& ~\ref{thm:k-lin-sep} \\
\end{tabular}
\caption{Summary of results obtained for different hypotheses classes under an \emph{adversarial}
	model with one-way and two-way communications. All results are for the \emph{noiseless} setting.
$\nu$ denotes the VC-dimension for the family of classifiers.}
%\avi{fill the missing values}}
\label{tab:contri}
%\vspace{-0.20cm}
\end{table*}
%\vspace{-0.25cm}

\paragraph{Problem definition.}
There are many aspects to formalizing the problem of learning classifiers with limited communication,
			including discussion of the data sources (i.i.d. or adversarial), data quality (noiseless or noisy),
			communication models (one-way, two-way or k-way) and classifier models (linear, non-linear, mixtures).
			In this paper, we focus on a simple core model that illustrates both the challenges and the benefits of focusing on the communication bottleneck. 

In our model, we first consider one-way and two-way communication between \emph{two} parties Alice and Bob
that receive \emph{noiseless} data sets $D_A$ and $D_B$ that result from partitioning a larger data set $D = D_A
\cup D_B$. Thereafter, we consider one-way and two-way communication between $k$ parties $P_1,P_2,\ldots,P_k$ that
receive noiseless data sets $D_1,D_2,\ldots,D_k$ partitioned from $D = \bigcup_{i=1}^k D_i$. In
either case, the partitioning may be done randomly, but might also be adversarial: indeed, a number of recent discussions~\citep{cbianchi09active-online,dekel10multoracle,laskov10ml-adv,adversariallink1,adversariallink2} highlight the need to consider adversarial data in learning scenarios.

% Our first contribution is formally defining a large family of important problems.  
% There are many settings for distributed learning, each posing its own set of challenges.  
% Data may arrive from multiple heterogeneous sources that might violate the statistical assumptions of \emph{iid} data.  In fact, recent discussions have highlighted the need to consider adversarial data in learning scenarios. 
% Moreover, the datasets partitioned in iid or adversarial fashion could be either noiseless (which implies perfect separability) or noisy (which relates to the popular paradigm of agnostic learning~\citep{balcan09agnact,dasgupta08genagact}). 
% Apart from datasets, one can additionally impose various assumptions on the communication protocol (one-way or two-way) and the classifiers learned, that is, whether all nodes learn similar (say, linear or non-linear) classifiers or a mixture thereof. 
% As can be seen, in our proposed setup, the number of possible scenarios can be
% overwhelmingly large and, in this paper, we focus on a core subset of the aforementioned landscape. 
% We consider the case of \emph{one-way} and \emph{two-way} communication between
% \emph{two-parties} (namely, $Alice (A)$ and $Bob (B)$) that receive noiseless data sets $D_A$ and $D_B$ ($D_A \cap D_B = \emptyset$) that are either \emph{iid} or \emph{adversarially partitioned}.   Some basic results also directly generalize to the noisy case.
%\jeff{and noisy idd.}
In our model, the nodes together learn (via communication) a classifier $h_k$ ($h_{AB}$ for two nodes $A$ and
		$B$) from a family of classifiers such as linear classifiers.  Let $h^*$ denote the optimal classifier
that can be learned on $D$.  Let $E_D(h)$ denote the number of points misclassified by some classifier $h$ on $D$.   
We say that $h_k$ has \emph{$\eps$-approximation error} ($\eps$-error for short) on $D$ if $E_D(h_k) -
E_D(h^*) \leq \eps |D|$.
% \hal{I'm not sure if I want to call this ``error'' or ``regret'' or
%   something else.  Standard definitions would be $\textrm{err}_D(h) =
%   E_D[h(x) \neq y]$ and regret would be $\textrm{reg}_D(h) =
%   \textrm{err}_D(h) - \min_{g \in 2^X} \textrm{err}_D(g)]$.  I would
%   lean toward calling it regret, but we really mean a regret specific
%   to the hypothesis class, so that the $\min$ ranges only over $g \in
%   \mathcal{H}$, not over all functions $g$.  I think the correct term
%   is ``external regret'' to refer to the case that you're comparing
%   against a fixed set of hypotheses, or at least that's the term from
%   online learning land.}
\emph{The goal is for $h_k$ to have at most $\eps$-error ($0 < \eps < 1$) while minimizing inter-node communication. }

In this paper, we phrase the learning task in terms of training error, rather than generalization. This is
motivated by numerous results that indicate that low training error combined with limits on the hypothesis
class used lead to good generalization bounds~\citep{bookkearns94clt}. 

%\vspace{-0.15cm}
\paragraph{Technical contributions.}
Our overall contribution, in this paper, is to model communication minimization (in distributed classification) as an active
probing problem.
We start in Section \ref{sec:random} by showing that, within our proposed framework, the one-way
communication problem can be solved trivially under i.i.d. assumptions (ref. Section~\ref{sec:random}). 
Hence, in this work, most of our effort is focused on \emph{adversarial distributions}.   
In all subsequent cases, we first help build intuition by discussing a two-party protocol and thereafter extend the
two-party results to the $k$-party case.
In Section \ref{sec:one-way} we show that, for \emph{one-way} communication, it is possible to learn optimal global classifiers \emph{exactly} (i.e., with $0$-error) for thresholds (in $\mathbb{R}^1$), intervals (in $\mathbb{R}^1$) and axis-aligned rectangles (in $\mathbb{R}^d$) with only a \emph{constant} amount of communication. 
For the case of linear separators, we prove an $\Omega(1/\eps)$ lower bound (ref. Appendix~\ref{app:lb-linsep}). 
%
%Thereafter in Section \ref{sec:linsep}, we present the main technical result of this paper
%(Theorem~\ref{thm:rounds} of Section~\ref{sec:linsep}) which shows that, for \emph{two-way} communication and
Thereafter in Section \ref{sec:linsep}, we present our \emph{two-way, two-party} communication protocol
\textsc{IterativeSupport} %\avi{need a macro for iterativesupport. some have an additional `s' -- Fix!!}
which 
%shows that, for \emph{two-way} communication and under adversarial distributions, an $\eps$-error linear separator (in $2$-d) can be 
learns an $\eps$-error classifier (under adversarial distributions)
using \emph{only} $O(\log 1/\eps)$ communication -- \emph{an exponential improvement over the one-way
	case!} Next in Section~\ref{sec:kparty}, we use the results of Section \ref{sec:linsep} to obtain an $O(k^2 \log 1/\eps)$
bound for $k$-parties using two-way communication. 
In Section~\ref{sec:expts}, we present results that demonstrate the correctness and
convergence of the linear separator algorithms and also empirically compare its performance with
a few other baselines. 
%It is non-trivial to extend our two-dimensional two-way linear separator approach to higher
%dimensions and so in this paper we limit our discussions to the initial two-dimensional case.

%
Table~\ref{tab:contri} summarizes the results obtained with references to appropriate sections of this paper.
All our results pertain to the noiseless setting which
assumes the existence of a classifier that perfectly separates the data. In Section~\ref{sec:disc}, we
provide outlines to extend our proposed results to noisy data. Finally, for cases when it is difficult to a priori ascertain the presence of noise, we present one-way communication lower bounds for learning in our model (ref. Appendix~\ref{app:lb-noisedetect}).

\section{Randomly Partitioned Distributions}
\label{sec:random}
%\vspace{-0.25cm}
We first consider the case when the data is partitioned randomly among nodes.  Specifically, each node $i$ can view its data $D_i$ as being drawn iid from $D \subset \b{R}^d$.  
We can now apply learning theory results for any family of classifiers $\c{H}$ with bounded VC-dimension $\nu$. % for $(\b{R}^d,\c{H})$.  
%First consider the non-agnostic (noiseless) setting where we assume that some classifier $h \in \c{A}$ perfectly classifies $D$.  
%We describe these results directly for the $k$-party problem since they 
Any classifier $h_S \in \c{H}$ which perfectly separates a random sample $S$ of $s = O((\nu/\eps) \log (\nu/\eps))$ samples from $D$ has at most $\eps$-classification error on $D$, with constant probability~\citep{book09anthony}.  Thus each $D_i$ can be viewed as such a sample $S$ and if $D_i$ is large enough, with no communication a node can return a classifier with small error.  

\begin{theorem}
\label{thm:id-net}
Let $\{D_1, \ldots, D_k\}$ randomly partition $D \subset \b{R}^d$.  In the noiseless setting a node $i$ can
produce a classifier from $(\b{R}^d, \c{H})$ (with VC-dimension $\nu$) with at most $\eps$-error for $\eps =
O((\nu/|D_i|) \log |D_i|)$, with constant probability.  
\end{theorem}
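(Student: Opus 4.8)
The plan is to read this theorem as the ``inverted'' form of the standard consistent-learner VC bound quoted immediately before the statement, combined with the observation that a random partition turns each $D_i$ into an i.i.d. sample from $D$. First I would note that, because $\{D_1,\ldots,D_k\}$ partitions $D$ randomly, node $i$'s share $D_i$ may be regarded as a set of $|D_i|$ points drawn i.i.d.\ from $D$. In the noiseless setting there is some $h^* \in \c{H}$ that perfectly separates all of $D$, so in particular $h^*$ perfectly separates the subsample $D_i$. Hence the set of hypotheses in $\c{H}$ consistent with $D_i$ is nonempty, and node $i$ can run empirical risk minimization on $D_i$ alone to obtain some $h_{D_i} \in \c{H}$ with zero training error on $D_i$, using no communication whatsoever.

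Next I would invoke the uniform-convergence guarantee for consistent hypotheses stated in the preamble: for a class of VC-dimension $\nu$, any hypothesis consistent with a random sample of size $s$ has true error at most $\eps$, with constant probability, provided $s = O((\nu/\eps)\log(\nu/\eps))$. Applying this with $s = |D_i|$ to the consistent hypothesis $h_{D_i}$ bounds the error of $h_{D_i}$ on $D$; the constant failure probability $\delta$ from the sample-complexity bound is absorbed into the ``constant probability'' in the theorem statement.

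The only remaining, and slightly delicate, step is to invert the sample-complexity relation so that $\eps$ is expressed as a function of $|D_i|$ rather than the reverse. Writing $x = \nu/\eps$, the relation $|D_i| = c\, x \log x$ gives $x \approx |D_i|/\log |D_i|$, since $\log x \approx \log|D_i|$ to leading order; substituting back yields $\nu/\eps \approx |D_i|/\log|D_i|$, i.e.\ $\eps = O((\nu/|D_i|)\log|D_i|)$. The point needing care is that the logarithm resolves to $\log|D_i|$ rather than $\log(\nu/\eps)$, which follows from the leading-order estimate $\nu/\eps \approx |D_i|$ so that $\log(\nu/\eps)\approx \log|D_i|$. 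I expect this algebraic inversion to be the main (though minor) obstacle; everything else is a direct reading of the quoted VC bound. Once the error of $h_{D_i}$ is bounded by this $\eps$, the claim follows, since node $i$ has produced the classifier purely from its own data.
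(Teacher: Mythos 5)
Your proposal is correct and follows essentially the same route as the paper: the paper's argument is precisely to view $D_i$ as an i.i.d.\ sample from $D$, apply the consistent-learner VC bound of size $s = O((\nu/\eps)\log(\nu/\eps))$, and read off the resulting error, with the inversion $\eps = O((\nu/|D_i|)\log|D_i|)$ left implicit. Your explicit treatment of the noiseless-consistency step and the $x\log x$ inversion just fills in details the paper omits.
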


A similar result (with slightly worse dependence on the $D_i$) can be obtained for the noisy setting. These results indicate that the $k$-party (and hence also two-party) setting is trivial to solve if we assume random partitioning of $D$.  Thus, for the remainder of the paper we focus on protocols for adversarially  partitioned data.  

%%% Local Variables: 
%%% mode: latex
%%% TeX-master: "2012aistats-main"
%%% End: 

%===========================================================================================================
\section{\emph{One-way} Two-Party Protocols}
\label{sec:one-way}
%\vspace{-0.25cm}
We now turn to data adversarially partitioned between two nodes $A$ and $B$, as disjoint sets $D_A$ and $D_B$, respectively.  For the hypothesis classes discussed in this section, \emph{one-way protocols} where only $A$ sends data to $B$ suffices for $B$ to learn an $\eps$-error classifier.
%\paragraph{Generic One-Way Protocols.}
%\label{ssec:generic1way}
Consider first a generic setting, with $D \subset \b{R}^d$ and family of hypothesis $\c{H} \subset 2^D$ so $(\b{R}^d,\c{H})$ has VC-dimension $\nu$.  
\begin{theorem}
\label{thm:generic1way}
Assume there exists a $0$-error classifier $h^* \in \c{H}$ on $D$ where $(D,\c{H})$ has VC-dimension $\nu$.  
Then $A$ sending $s_\eps = O((\nu/\eps) \log (\nu/\eps))$ random samples ($S_A \subset D_A$) to $B$ allows $B$ to, with constant probability, produce an $\eps$-error classifier $h \in \c{H}$.
% where $(D,\c{A})$ has VC-dimension $\nu$.  %, with $\eps$-error when 
%\begin{packeditemize} \vspace{-.15in}
%\item $s_\eps = O((\nu/\eps) \log (\nu/\eps))$ in the non-agnostic setting; or
%\item $s_\eps = O(\nu/\eps^2)$ in the agnostic setting.  
%\end{packeditemize}
\end{theorem}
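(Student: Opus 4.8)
The plan is to reduce this to the standard realizable VC sample-complexity bound already quoted just before Theorem~\ref{thm:id-net}, but with one twist that requires care: the sample $S_A$ that $B$ receives is drawn from $D_A$ alone, \emph{not} from all of $D$, so that bound cannot be applied to $D$ directly. I would get around this by splitting the misclassification count additively over the two pieces of the partition and treating them differently. $B$ can control the error on $D_B$ \emph{exactly}, because it holds all of $D_B$, and it controls the error on $D_A$ only \emph{statistically}, through the received sample $S_A$.

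The first step is to observe that a consistent hypothesis exists. Since $h^*$ makes $0$ errors on all of $D = D_A \cup D_B$, it in particular makes no mistakes on $D_B$, nor on any subset $S_A \subseteq D_A$. Thus $B$ can run empirical risk minimization over $\c{H}$ to find some $h \in \c{H}$ that is consistent with the full set $D_B$ together with the received sample $S_A$; such an $h$ is guaranteed to exist ($h^*$ itself being one witness). By construction $E_{D_B}(h) = 0$.

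The second step bounds $E_{D_A}(h)$. Here I would view $S_A$ as an i.i.d. sample of size $s_\eps = O((\nu/\eps)\log(\nu/\eps))$ drawn from the uniform distribution on $D_A$, and invoke the quoted $\eps$-sample guarantee with VC-dimension $\nu$ (which upper bounds the VC-dimension of $\c{H}$ restricted to $D_A \subseteq D$). Because that guarantee holds \emph{uniformly} over every hypothesis in $\c{H}$ consistent with $S_A$, it applies to the particular $h$ that $B$ produced, even though $h$ was \emph{additionally} constrained to fit $D_B$: with constant probability, $E_{D_A}(h) \le \eps\,|D_A|$.

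Finally I would combine the two bounds. Since $E_D(h) = E_{D_A}(h) + E_{D_B}(h) \le \eps\,|D_A| + 0 \le \eps\,|D|$ while $E_D(h^*) = 0$, the classifier $h$ satisfies $E_D(h) - E_D(h^*) \le \eps\,|D|$, i.e. it has $\eps$-error on $D$, as required. The one genuinely delicate point — and the step I expect to need the most attention — is precisely the realization that the adversarially assembled set $S_A \cup D_B$ is not an i.i.d. sample from $D$, so the VC bound must be wielded only against the uniform distribution on $D_A$. It is the exact, communication-free handling of $D_B$ together with the additive decomposition of the error that lets the one-sided sampling of only $A$'s data suffice.
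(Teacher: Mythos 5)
Your proposal is correct and follows essentially the same route as the paper's own proof: the returned classifier is consistent with $D_B \cup S_A$, hence errs only on $D_A$, where the $\eps$-net property of $S_A$ (with constant probability) bounds the error by $\eps|D_A| \le \eps|D|$. Your write-up simply makes explicit the additive decomposition and the uniform-convergence step that the paper leaves implicit.
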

%\vspace{-.2in}
\begin{proof}
%In the non-agnostic setting, 
The classifier returned by $B$ will have $0$ error on $D_B \cup S_A$; thus it only has error on $D_A$.  Since $S_A$ is an $\eps$-net of $D_A$ with constant probability, then it has at most $\eps$-error on $D_A$ and hence at most $\eps$-error on $D_A \cup D_B = D$.  
%
%In the agnostic setting, the classifier is optimal on $D_B \cup S_A$ (after appropriately weighting $S_A$ so its total weight is the same as $D_A$).  Since $S_A$ is an $\eps$-approximation of $(D,\c{A})$, with constant probability, this classifier has at most an $\eps |D_A|$ difference in the error on $D_A$, as compared to $S_A$.  This also implies that the optimal classifier $h^* \in \c{A}$ on $D_A \cup D_B$ is at most $\eps |D_A|$ difference on $D_B \cup S_A$, and hence $h$ has less error than $h^*$ on $D_B \cup S_A$, and thus is at most $\eps |D_A|$ far from optimal on $D_B \cup D_A$.  
\end{proof}
%\vspace{-.15in}
A similar result with $s_\eps = O(\nu/\eps^2)$ applies to the noisy setting.  
An important technical contribution of this paper is to show that in many cases we can improve upon these general results. %We focus hereafter on specific geometric concepts. 

%--------------------------------------
\subsection{Specific Hypothesis Classes}
\label{ssec:specific}
%\subsection{Thresholds in $\b{R}^1$}
%\label{ssec:threshold}

\paragraph{Thresholds.} First we describe how to find a threshold $t \in \c{T} \subset \b{R}$ such that all points $p \in D$ with $p < t$ are positive and with $p > t$ are negative.  $A$ sends to $B$ a set $S_A$ consisting of two points in $D_A$: its largest positive point $p^+$ and its smallest negative point $p^-$.  Then $B$ returns a $0$-error classifier on $D_B \cup S_A$.
\begin{lemma}
\label{lem:threshold}
In $O(1)$ one-way communication we can find a $0$-error classifier in $(D,\c{T})$.
\end{lemma}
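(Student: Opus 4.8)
The plan is to show that the two-point summary $S_A = \{p^+, p^-\}$ that Alice sends captures exactly the information Bob needs about $D_A$, so that any $0$-error classifier Bob computes on $D_B \cup S_A$ is automatically a $0$-error classifier on all of $D$. Since Alice transmits only two points, the communication is trivially $O(1)$; the substance lies entirely in the correctness argument.

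First I would use the noiseless assumption: there is a global threshold $t^*$ separating $D$, with every positive point of $D$ lying below $t^*$ and every negative point above it. In particular $D_B \cup S_A$ is a subset of a separable set and is therefore itself separable, so the $0$-error classifier $h$ that Bob returns is well-defined. Write $h$ as a threshold $t$, with points below $t$ labeled positive and points above $t$ labeled negative.

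The key step is to argue that $h$ also makes no error on $D_A$. Take any $q \in D_A$. If $q$ is positive then $q \le p^+$, since $p^+$ is by definition the largest positive point of $D_A$; because $h$ classifies $p^+ \in S_A$ correctly we have $p^+ < t$, and hence $q < t$, so $h$ labels $q$ positive. Symmetrically, if $q$ is negative then $q \ge p^-$ and $p^- > t$, giving $q > t$ and the correct negative label. Thus $h$ has $0$-error on $D_A$, and since it already has $0$-error on $D_B$ by construction, it has $0$-error on $D = D_A \cup D_B$, which is the claim.

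The only real obstacle is bookkeeping around degenerate inputs: if $D_A$ has no positive (respectively negative) points, then $p^+$ (respectively $p^-$) does not exist and the corresponding half of the argument becomes vacuous, with Alice sending only the one relevant point (or none at all). I would dispose of these boundary cases with the convention that a missing extreme point imposes no constraint on $t$, which affects neither the $O(1)$ communication bound nor the correctness argument above.
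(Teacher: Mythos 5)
Your argument is correct and is essentially the paper's own proof: both hinge on the observation that any $0$-error threshold on $D_B \cup \{p^+, p^-\}$ must lie in the interval $(p^+, p^-)$ and therefore correctly classifies every point of $D_A$ by the extremality of $p^+$ and $p^-$. Your handling of the degenerate case where $D_A$ lacks positive or negative points is a small addition the paper leaves implicit, but it does not change the substance.
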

%\vspace{-.2in}
\begin{proof}
%$A$ sends to $B$ a set $S_A$ consisting of two points in $D_A$: its largest positive point $p^+$ and its smallest negative point $p^-$.  Then $B$ returns the perfect classifier on $D_B \cup S_A$.  
The optimal classifier $t \in \c{T}$ must lie in the range $[p^+, p^-]$ otherwise, it would misclassify some point in $D_A$, breaking our noiseless assumption.  Then any $0$-error classifier on $D_B$ within this range is has $0$ error on $D$.  
\end{proof}
%\vspace{-.15in}

%--------------------------------------
%\subsection{Intervals in $\b{R}^1$}
%\label{ssec:intervals}

\paragraph{Intervals.}We can now apply Lemma~\ref{lem:threshold} to get stronger bounds.  In particular, this generalizes to the family $\c{I}$ of intervals in $\b{R}^1$.  First $A$ finds $h_A$, its optimal classifier for $D_A$.  This interval has two end points each of which lies in between a pair of a positive and a negative point (if there are no negative or no positive points, $A$ returns the empty set).  These two pairs of points form a set $S_A$ that $A$ sends to $B$. $B$ now returns the classifier that optimally separates $D_B \cup S_A$, and if $S_A$ is empty then the interval classifier is as small as possible.

\begin{lemma}
\label{lem:intervals}
In $O(1)$ one-way communication we can find a $0$-error classifier $h \in \c{I}$.  
\end{lemma}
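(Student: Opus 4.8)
The plan is to treat an interval as a pair of thresholds and reduce correctness to the threshold argument of Lemma~\ref{lem:threshold}, applied once at each endpoint. Assume without loss of generality that the positive points are those lying inside the interval. Because $D$ (and hence $D_A$) is interval-separable, the positive points of $D_A$ occupy a contiguous block: if $p^+_1$ and $p^+_2$ denote the smallest and largest positive points of $D_A$, then no negative point of $D_A$ lies in $[p^+_1, p^+_2]$, since any such point would be forced inside the true separating interval and mislabeled. Let $p^-_1$ be the largest negative point of $D_A$ below $p^+_1$ and $p^-_2$ the smallest negative point above $p^+_2$; these are exactly the points that sandwich the two endpoints of $A$'s optimal interval $h_A$, and $S_A = \{p^-_1, p^+_1, p^+_2, p^-_2\}$ (omitting a pair when the corresponding negative is absent).

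The key step I would establish is that every interval $h \in \c{I}$ with $0$ error on $D_B \cup S_A$ already has $0$ error on all of $D_A$, so that $B$'s optimal classifier on $D_B \cup S_A$ (which exists because the global $h^*$ witnesses separability of this subset) is in fact globally $0$-error. Writing $h = [a,b]$, consistency with $S_A$ forces $p^-_1 < a \le p^+_1$ and $p^+_2 \le b < p^-_2$. The first chain gives $[p^+_1, p^+_2] \subseteq [a,b]$, so every positive point of $D_A$ — all of which lie in $[p^+_1, p^+_2]$ — is correctly included. For the negatives, I would invoke the contiguity observation: every negative point of $D_A$ lies either to the left of $p^+_1$, hence at or below $p^-_1 < a$, or to the right of $p^+_2$, hence at or above $p^-_2 > b$; either way it falls outside $[a,b]$ and is correctly excluded. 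This is precisely Lemma~\ref{lem:threshold} applied independently to the left and right endpoints.

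The step I expect to be the most delicate is the degenerate bookkeeping when $D_A$ lacks positives or lacks one-sided negatives, which is why the protocol specifies that $B$ chooses the smallest consistent interval when $S_A$ is empty. If $D_A$ has no positives, global noiselessness places all of $A$'s negatives outside the true interval, hence outside the minimal interval $[\min^+_B, \max^+_B]$ spanned by $B$'s positives; taking $B$'s interval as small as possible then excludes them automatically. If $D_A$ has positives but no negatives on one side, the corresponding endpoint is unconstrained from that direction and the same minimality argument applies to that side. Finally, since $|S_A| \le 4$ regardless of $|D_A|$, the communication is $O(1)$, completing the proof. The only real subtlety is verifying that these boundary cases never let $B$'s freedom in choosing among multiple $0$-error intervals on $D_B$ accidentally swallow a far-away negative point of $A$; the smallest-interval tie-break is exactly what rules this out.
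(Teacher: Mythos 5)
Your proof is correct and follows essentially the same route as the paper's: send the two positive/negative pairs sandwiching the endpoints of $A$'s optimal interval, reduce each endpoint to an instance of Lemma~\ref{lem:threshold}, and handle the empty-$S_A$ case via the minimal separating interval on $D_B$. Your write-up is in fact more explicit than the paper's about the contiguity of $D_A$'s positives and the one-sided degenerate cases, but the underlying argument is the same.
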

%\vspace{-.2in}
\begin{proof}
When $S_A$ is nonempty, this encodes two versions of Lemma \ref{lem:threshold}.  Assume without loss of generality that the positive points are contained in an interval with negative points lying outside the interval.  Then we can pick any positive point $p$ from either set $D_A$ or $D_B$ and consider the points greater than $p$ in the first instance of Lemma \ref{lem:threshold} and points less than $p$ in the second instance.  Invoking Lemma \ref{lem:threshold} proves this case.
When $S_A$ is empty, and a perfect classifier exists, then the minimal separating interval on $D_B$ will not violate any points in $S_A$, and will have no error.  
\end{proof}
%\vspace{-.15in}

%%%%%%%%%%%%%%%%%%%%%%%%%%%%%%%%%%%%%%%
%\subsection{Axis-Aligned Rectangles in $\mathbb{R}^d$}
%\label{ssec:aarects}	
\paragraph{Axis-aligned rectangles.}
We now consider finding a $0$-error classifier from the family $\c{R}^d$ of all axis-aligned rectangles in $\b{R}^d$.
%; we assume here such a classifier exists for data set $D$.  
An axis-aligned rectangle $R \in \c{R}^d$ can be defined by $d$-values in $\b{R}^d$, a minimum and maximum value along each coordinate axis. Given a data set $P$, the \emph{minimum axis-aligned rectangle} for $P$ is the smallest axis-aligned rectangle that contains all of $P$; that is, it has the smallest maximum coordinate possible along each coordinate axis and the largest minimum coordinate possible along each coordinate axis.  These $2d$ terms can be optimized independently as long as $P$ is non-empty.  

For a dataset $D_A$ we can define two minimum axis-align rectangles $R_A^+$ and $R_A^-$ defined on the positive and negative points, respectively.  If the positive or negative point set is empty, then each coordinate minimum and maximum is set to a special character $\emptyset$.  Two such rectangles can be defined for $D_B$ and $D = D_{A \cup B}$ in the same way. 

\begin{theorem}
\label{thm:aarects}
A one-way protocol where $A$ sends $R_A^+$ and $R_A^-$ to $B$ is sufficient to find a $0$-error classifier $h_{AB} \in \c{R}^d$ in the noiseless setting.  It requires $O(d)$ communication complexity.  
\end{theorem}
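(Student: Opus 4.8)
The plan is to reduce the whole statement to a single structural fact about axis-aligned rectangles: for any point set $P$, every rectangle in $\c{R}^d$ that contains $P$ must contain the minimum axis-aligned rectangle of $P$. This is immediate coordinate by coordinate, since a containing rectangle must have its lower (resp.\ upper) coordinate at most (resp.\ at least) the minimum (resp.\ maximum) coordinate of $P$. I would isolate this monotonicity observation first, since everything else rests on it.

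With this in hand, adopt the convention that the interior of the classifying rectangle carries the positive label, and let $R^* \in \c{R}^d$ be a $0$-error classifier, which exists by the noiseless assumption. Let $R^+_D$ be the minimum axis-aligned rectangle of all positive points of $D = D_A \cup D_B$. Since $R^*$ contains every positive point, the monotonicity fact gives $R^+_D \subseteq R^*$; and since $R^*$ contains no negative point, neither does $R^+_D$. Hence $R^+_D$ classifies every point of $D$ correctly and is itself a $0$-error classifier. This containment step — enclosing only the positives cannot accidentally swallow a negative — is the crux of the argument, and it is precisely where noiselessness is used.

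It then remains to show that $B$ can recover $R^+_D$ from what is transmitted. Because the $2d$ defining coordinates of a minimum rectangle optimize independently, the minimum (resp.\ maximum) of all positives along each axis is just the smaller (resp.\ larger) of the corresponding coordinate of $R_A^+$ and of $R_B^+$, where $B$ computes $R_B^+$ locally. Thus $B$ forms $R^+_D = R^+_{AB}$ by a coordinatewise combination of the received $R_A^+$ with its own $R_B^+$, and outputs it. The protocol additionally transmits $R_A^-$ (giving the stated $4d$ total); this symmetric box lets $B$ assemble $R^-_{AB}$ as well, which is convenient for the dual enclosure and for the degenerate cases flagged by the $\emptyset$ markers. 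Since each of $R_A^+$ and $R_A^-$ is described by $2d$ values, the total communication is $4d = O(d)$.

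I expect the only genuine obstacle to be a clean treatment of the degenerate cases: when the positive (or negative) class is empty on one or both sides, the minimum rectangle is undefined and must be replaced by the $\emptyset$ convention, and one has to check that the resulting output still achieves $0$ error on all of $D$ (for instance, a globally empty positive set forces the all-negative classifier, handled by $B$ falling back to the other party's box). The non-degenerate core reduces entirely to the short containment argument above.
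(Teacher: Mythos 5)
Your proposal is correct and follows essentially the same route as the paper's proof: the coordinatewise monotonicity of minimum enclosing rectangles lets $B$ assemble $R_{A\cup B}^{+}$ from $R_A^{+}$ and its own $R_B^{+}$, and the containment $R_{A\cup B}^{+}\subseteq h^{*}$ forced by noiselessness shows this box misclassifies nothing. The only divergence is that you fix the inside-is-positive orientation by convention, whereas the paper treats the orientation as unknown and has $B$ infer which of $R_{A\cup B}^{+}$ and $R_{A\cup B}^{-}$ plays the role of the enclosing box --- a one-line addition if the hypothesis class is read as permitting either labeling.
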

%\vspace{-.2in}
\begin{proof}
The key observation is that the minimum axis-aligned rectangle that contains $R_A^+$ and $R_B^+$ is precisely $R_{A \cup B}^+$ (and symmetrically for negative points).  Since the minimum and maximum for each coordinate axis is set independently, then we can optimize each using that value from $R_A^+$ and $R_B^+$.  Thus $B$ can compute this using points from $D_B$ and $R_A^+$.  

First, consider the case where positive points are inside the classifier and negative points are outside.  
Since there exist a $0$-error classifier $h^*$, then $R_{A \cup B}^+$ must be contained in that classifier, since no smaller classifier can contain all positive points.  
It follows by our assumption that $h^*$ and thus also $R_{A \cup B}^+$ contains no negative points, and can be returned as our $0$-error classifier $h_{AB}$.  
$B$ can determine if positive or negative points are inside by which of $R_{A \cup B}^+$ and $R_{A \cup B}^-$ is smaller.  If $R_A^+$ or $R_A^-$ is $\emptyset$, then $R_{A \cup B}^+ = R_B^+$ or $R_{A \cup B}^- = R_B^-$, respectively.  
\end{proof}

%-------------------------------------------------------------------------------------------------
\paragraph{Hyperplanes in $\mathbb{R}^2$.}
\label{ssec:lb-linsep}
The positive results from simpler geometric concepts do not extend to hyperplanes. 
We prove the following lower bound in Appendix \ref{app:lb-linsep}.
%Let $\c{L}^d$ denote the family of hyperplanes in $\mathbb{R}^d$.
\begin{theorem}
\label{thm:lb-linsep}
Using only one-way communication from $A$ to $B$, it requires $\Omega(1/\eps)$ communication to find an $\eps$-error linear classifier in $\b{R}^2$.  
\end{theorem}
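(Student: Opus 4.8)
The plan is to prove this lower bound by reduction from the \textsc{Indexing} problem, whose one-way (Alice-to-Bob) randomized communication complexity is $\Omega(n)$: Alice holds $x \in \{0,1\}^n$, Bob holds an index $i \in [n]$, and from Alice's single message Bob must output $x_i$ with constant probability. Setting $n = \Theta(1/\eps)$, I would show that any one-way protocol that lets Bob produce an $\eps$-error linear classifier in $\b{R}^2$ also lets Bob recover $x_i$, so the message must carry $\Omega(n) = \Omega(1/\eps)$ bits. Using \textsc{Indexing} (rather than a deterministic fooling set) is convenient because its $\Omega(n)$ bound already holds for randomized protocols, matching the constant-probability success allowed in the theorem statement.

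The heart of the argument is a geometric gadget that turns Alice's bit-vector into a linearly separable point set in which each bit is independently readable. First I would fix $n$ slots at $n$ well-separated angular positions on a convex arc, and for slot $j$ provide two nearby candidate locations $u_j^0$ and $u_j^1$ encoding $x_j$; Alice places a point at $u_j^{x_j}$. The two candidate locations are chosen close enough that the optimal separator behaves near slot $j$ like a fixed tangent direction, so the side on which Alice's point falls --- hence the value $x_j$ --- is exactly what any $0$-error separator must get right there. Because the slots sit in convex position, a single line can be made simultaneously consistent with every slot, which is what keeps the instance \emph{noiseless} regardless of $x$.

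To read bit $i$, Bob places his own (heavily repeated) points only in the neighborhood of slot $i$, with enough multiplicity that misclassifying slot $i$ alone already costs more than $\eps|D|$, while the remaining slots contribute a negligible fraction of $|D|$. An $\eps$-error classifier is then forced to classify slot $i$ correctly, revealing whether Alice's point sits at $u_i^0$ or $u_i^1$ and thus $x_i$, completing the reduction. Equivalently, this can be phrased as a fooling-set argument: the $2^n$ choices of $x$ are pairwise distinguishable, since for $x \neq x'$ some coordinate $i$ differs and Bob's index-$i$ instance forces different outputs, so any one-way message must take $2^{\Omega(1/\eps)}$ distinct values.

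I expect the main obstacle to be designing the gadget so that three competing requirements hold at once: (a) the union $D_A \cup D_B$ is exactly linearly separable for every legal pair $(x,i)$, so the noiseless hypothesis of the theorem is satisfied; (b) all $n = \Theta(1/\eps)$ bits remain independently recoverable even though a single line couples the behavior at every slot; and (c) the point multiplicities make one wrong slot unaffordable within the $\eps|D|$ budget while keeping the total count $\Theta(n)$, so that $\eps = \Theta(1/n)$. Reconciling the convex-position geometry with these multiplicities is the crux; once it is in place, the $\Omega(n)$ one-way bound for \textsc{Indexing} immediately yields the claimed $\Omega(1/\eps)$.
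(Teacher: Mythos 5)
Your proposal matches the paper's proof essentially exactly: the paper also reduces from the \textsc{Indexing} problem with $\Theta(1/\eps)$ slots placed around a circle, encodes each bit by a local two-way choice of point placement, gives $B$ a positive point at slot $i$ so that any separator correct there reveals the bit, and uses point multiplicities so that misclassifying slot $i$ alone already exceeds the $\eps|D|$ budget. The gadget you defer as ``the crux'' is instantiated in the paper by a pair of negative points per slot---one just inside and one just outside the circle, in one of two left/right orders---with $B$'s positive point wedged between them so that the near-tangent separator is forced to tilt toward whichever point lies outside, making the bit readable from any correct classifier.
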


Note that due to Theorem \ref{thm:id-net}, this is tight up to a $\log(1/\eps)$ factor for one-way communication.  

We can extend this lower bound to the $k$-node one-way model of computation where we assume each node $P_i$ can only send data to $P_{i+1}$.  In this case, we give node $A$'s input to $P_1$, and node $B$'s input to node $P_k$, and nodes $P_i$ for $i \in [2,k-1]$ have no data.  Then each node $P_i$ is forced to send the $\Omega(1/\eps)$ communication that $A$ wants to send to $B$ along the chain.  

\begin{theorem}
\label{thm:lb-linsep-k}
Using only one-way communication among $k$-players in a chaining model, it requires $\Omega(k/\eps)$ communication to find an $\eps$-error linear classifier in $\b{R}^2$.  
\end{theorem}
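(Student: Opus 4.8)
The plan is to reduce the $k$-party lower bound to the two-party lower bound of Theorem~\ref{thm:lb-linsep} by embedding the hard two-party instance into the chain. Since we are free to choose how the data is distributed among the $k$ nodes, I would exploit the adversarial model maximally: place node $A$'s hard input at the chain endpoint $P_1$, place node $B$'s hard input at the other endpoint $P_k$, and give every intermediate node $P_2, \ldots, P_{k-1}$ an empty dataset. The goal classifier must have $\eps$-error on $D = D_1 \cup \cdots \cup D_k = D_A \cup D_B$, which is exactly the two-party instance on which Theorem~\ref{thm:lb-linsep} applies.

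The core argument is an information-flow bottleneck. In the chaining model each $P_i$ may only transmit to $P_{i+1}$, so the only way information from $P_1$ (which holds $A$'s input) can reach $P_k$ (which must ultimately be able to output, or participate in producing, an $\eps$-error classifier that depends on $A$'s data) is to traverse each of the $k-1$ links in sequence. First I would argue that $P_k$ cannot produce a correct $\eps$-error classifier without effectively receiving at least $\Omega(1/\eps)$ bits of information about $D_A$: this is precisely the content of Theorem~\ref{thm:lb-linsep}, since the composite channel $P_1 \to P_2 \to \cdots \to P_k$ simulates a single one-way channel from $A$ to $B$. Hence the first link $P_1 \to P_2$ must already carry $\Omega(1/\eps)$ communication.

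The step I expect to be the crux is showing that \emph{every} link, not just the first, must carry $\Omega(1/\eps)$ communication, so that the total is $\Omega(k/\eps)$ rather than merely $\Omega(1/\eps)$. The clean way to see this is a cut argument: fix any link $P_i \to P_{i+1}$ and consider the cut separating $\{P_1, \ldots, P_i\}$ from $\{P_{i+1}, \ldots, P_k\}$. Since the intermediate nodes are empty, all of $A$'s data lies on the left of the cut and all of $B$'s data (and the node that must be able to classify) lies on the right, so the messages crossing this single link constitute a complete one-way transcript from the ``$A$-side'' to the ``$B$-side.'' By Theorem~\ref{thm:lb-linsep} that transcript has length $\Omega(1/\eps)$. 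Applying this to each of the $k-1$ cuts and summing gives total communication $\Omega(k/\eps)$, since the communication across distinct links is disjoint. The main obstacle is making the simulation rigorous: one must confirm that a protocol violating the $\Omega(k/\eps)$ bound would yield, by concatenating the messages along the chain (or by contradiction at the lightest cut), a valid two-party one-way protocol beating the $\Omega(1/\eps)$ bound, which requires checking that the chaining restriction genuinely prevents any shortcut routing of $A$'s information to $B$.
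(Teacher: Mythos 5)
Your proposal matches the paper's argument exactly: the paper also places $A$'s hard input at $P_1$, $B$'s at $P_k$, leaves $P_2,\ldots,P_{k-1}$ empty, and observes that each of the $k-1$ links must carry the $\Omega(1/\eps)$ bits that $A$ needs to convey to $B$. Your cut-based justification that every link individually constitutes a complete one-way transcript from the $A$-side to the $B$-side is a correct and slightly more explicit rendering of the paper's one-sentence reasoning.
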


%%% Local Variables: 
%%% mode: latex
%%% TeX-master: "2011nips-main"
%%% End: 

%===============================================================================================
\section{\emph{Two-way} Two-Party Protocols for Linear Separators}
%\section{Linear Separators}
\label{sec:linsep}
%\vspace{-0.25cm}
%In this section, we present an upper bound on the communication complexity of learning a joint linear separator $h_{AB}$ that has $\eps$ error with respect to $h^*$. 
%We present an algorithm for the case of two dimensions.
%First, we present the algorithm for the case of two dimensions and subsequently generalize to higher dimensions.
In this section, we present a two-party algorithm that uses two-way communication to learn an
$\eps$-optimal combined classifier $h_{AB}$.
We also rigorously prove an $O(\log (1/\eps))$ bound on communication required.  
%that has $\eps$ error with respect to $h^*$. 
%We present an algorithm for the case of two dimensions.

%\subsection{2-d case}

%-----------------------------------------------------------------------------------------------
%\vspace{-0.15cm}
%\paragraph{Algorithm Overview.}
\subsection{Algorithm Overview}
%\vspace{-0.15cm}
Our algorithm proceeds in rounds. In each round both nodes send a constant number of points to the other.  The goal is to limit the number of rounds to $O(\log(1/\eps))$ resulting in a total communication complexity of $O(\log(1/\eps))$.  At the end of $O(\log(1/\eps))$ rounds of communication, the algorithm yields a combined classifier $h_{AB}$ that has $\eps$ error on $D$.

In order to bound the number of rounds, each node must maintain information about which points the
\emph{other node} might be classifying correctly or not at any stage of the algorithm. Specifically, suppose
node $A$ is sent a classifier $h_B$ from node $B$ (learned on $D_B$ and hence has zero error on $D_B$) and this classifier misclassifies some points in $D_A$. We denote these points as the  \emph{Set of Disagreement} (\sod) where \sod $ \subseteq D_A$. The remaining points in $D_A$ can be divided into the \emph{Set of Total Agreement} (\sota), which are the points on which classifiers from $A$ and $B$ will continue to agree on in the future, and the \emph{Set of Luck} (\sol), which are points on which the two nodes currently agree, but might disagree later on. The set of disagreement and the set of luck together form the \emph{Set of Uncertainty} $\sou = \sod \cup \sol$, representing all points that may or may not be classified incorrectly by $B$ in the future. 

Our goal will be to show that the \sou decreases in cardinality by a constant factor in each round. Achieving this will guarantee that at the end of $\log(1/\eps)$ rounds, the size of the \sou will be at most an $\eps$-fraction of the total input. Since $|\sou| \ge |\sod|$, we obtain the desired $\eps$-error classifier. 

The simplest strategy would be for each node to build a max-margin classifier on all points it has seen thus far, and send the support points for this classifier to the other node.  While this simple protocol might converge quickly in practice (we actually compare against it in Section \ref{sec:expts}, it is called \supp, and it often does), in principle this protocol may take a linear number of rounds to converge.  Thus, our algorithm will choose non-max-margin support vectors, but we will show that by sending these points we can achieve provable error and communication trade-off bounds. 

\begin{figure}
%\vspace{-0.15cm}
\centering 
\includegraphics[width=.8\columnwidth]{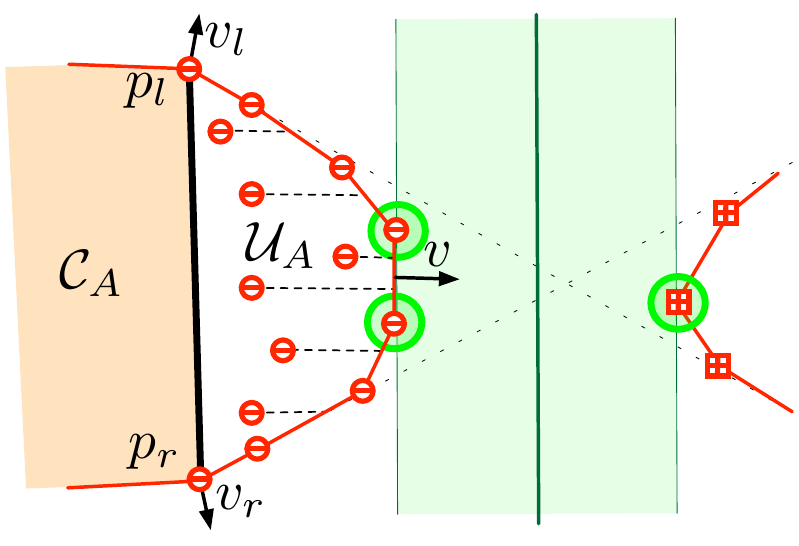}
\caption{\label{fig:init}$3$ support points chosen from $U_A$, and the family of $0$-error classifiers for $A$ parallel to $h_A$.  }
%\vspace{-0.25cm}
\end{figure}

%-----------------------------------------------------------------------------------------------
\subsection{The Algorithm}
\paragraph{Definitions and notation.}
Let $\c{P}^+_A$ and $\c{P}^-_A$ denote polytopes that contain positive and negative points in $D_A$, respectively.  
Let $\c{C}^+_A$ and $\c{C}^-_A$ denote the convex hulls formed by the positive and negative \sota in $D_A$ after the $i^{\text{th}}$ round, respectively.  
%We'll denote the convex hull of a set $X$ as $\c{C}(X)$.  
In general, when sets have a $+$ or $-$ superscript it will denote the restriction of that set to only positive or negative points, respectively.  Often to simplify messy, but usually straightforward, technical details we will drop the superscript and refer to either or both sets simultaneously. %; a more careful technical presentation is provided in Appendix \ref{app-sec:linsep}.  
%For instance, we will use $\c{P}_A$ and $C_{A,i}$, to refer to either one of the polytopes.  
We denote the \emph{region of uncertainty} $\c{U}_A$ as $\c{P}_A \setminus \c{C}_A$, and note $U_A = \c{U}_A \cap D_A$.  

In each round $A$ will send to $B$ a set $S_A \subset D_A$; these points imply a max-margin classifier $h_A$ on $S_A$ that has $0$ error on $D_A$; see Figure \ref{fig:init}. 
Then $B$ will either terminate with an $\eps$-error classifier $h_B$, or symmetrically return a set of points $S_B \subset D_B$.  
%will send in reply a set $S_{B,i} \subset D_B$.  
%Let $W_{B,i} = \bigcup_{j=1}^i S_{B,i}$ and $W_{A,i} = \bigcup_{j=1}^i S_{A,i}$ to refer to the union of all points set up through the $i$th round.  
%The set $S_{A,i}$ will include two parts: the support vectors of $D_A \cup W_{B,i-1}$ (except points already in $W_{A,i-1}$) and some additional points described below.  
%The support vectors in $S_{A,i}$ imply a max-margin classifier $h_{A,i}$ (and similar $h_{B,i}$).  
%
%
This process is summarized in Algorithm \ref{alg:ouralgo}.  

\begin{algorithm}[!htbp]
 \caption{\textsc{IterativeSupports}}
 \begin{algorithmic}
   \STATE \textbf{Input:} $D_A$ and $D_B$
   \STATE \textbf{Output:} $h_{AB}$ (classifier with $\eps$-error on $D_A \cup D_B$)
   \STATE $S_A := \csup(D_A)$; send $S_A$ to $B$;
   \WHILE{(1)} 
     \STATE --------- \textbf{B's move} ---------
		 \STATE compute error (err) using $h_A$ (from $S_A$) on $D_B$;
		 \STATE if(err $\leq$ $\eps|D_B|$) then exit;
		 %\STATE $D^+_B = D^+_B \cup S^+_A$; $D^-_B = D^-_B \cup S^-_A$;
     %\STATE $S_B := \csup(D^+_B) \cup \csup(D^-_B)$ 
		 \STATE $D_B = D_B \cup S_A$; $S_B := \csup(D_B)$; send $S_B$ to $A$;
     \STATE --------- \textbf{A's move} ---------
		 \STATE compute error (err) using $h_B$ (from $S_B$) on $D_A$;
		 \STATE if(err $\leq$ $\eps|D_A|$) then exit;
		 %\STATE $D^+_A = D^+_A \cup S^+_B$; $D^-_A = D^-_A \cup S^-_B$;
     %\STATE $S_A := \csup(D^+_A) \cup \csup(D^-_A)$ 
		 \STATE $D_A = D_A \cup S_B$; $S_A := \csup(D_A)$; send $S_A$ to $B$;
	 \ENDWHILE
 \end{algorithmic}
 \label{alg:ouralgo}
\end{algorithm}

Two aspects remain: determining if a player may exit the protocol with a $\eps$-error classifier (early termination), and computing the support points in the function \csup.

%-----------------------------------------------------------------------------------------------
%\vspace{-0.15cm}
\subsection{Early Termination}
%\vspace{-0.15cm}
Note that in Algorithm \ref{alg:ouralgo}, under certain \emph{early-termination} conditions, player $B$ may terminate the protocol and return a valid classifier, even if $h_A$ has more than $\eps$ error on $D_B$.  Any classifier that is parallel to $h_A$ and is shifted less than the margin of the max-margin classifier also has $0$ error on $D_A$.  Thus if any such classifier has at most $\eps$-error on $D_B$, player $B$ can terminate the algorithm and return that classifier.  

%\begin{wrapfigure}{r}{.5\textwidth}
\begin{figure}[!htbp]
\centering %\vspace{-.15in}
\includegraphics[width=\columnwidth]{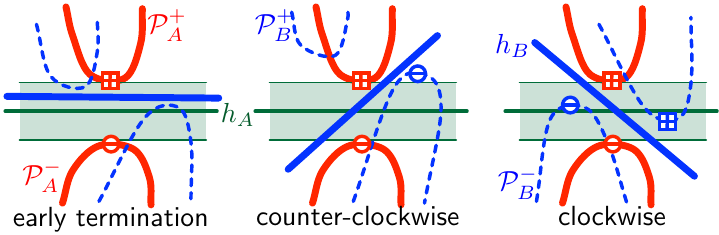}
%\vspace{-.25in}
\caption{Cases for either early termination, or for the direction of the normal to the linear separator being forced counter-clockwise or clockwise.\label{fig:early-term}}
\end{figure}
%\end{wrapfigure}

This early-termination observation is important because it allows $B$ to send to $A$ information regarding a $0$-error classifier, with respect to $h_A$, and the points $S_A$ that define it.  
If $B$ cannot terminate, then either some point in $D_B$ must be completely misclassified by all separators within the margin, or some negative point in $D_B$ and some positive point in $D_B$ must both be in the margin and cannot be separated; see Figure \ref{fig:early-term}.  Either scenario implies that any $\eps$-error classifier on $D_B$ must rotate in some direction (either clockwise or counter-clockwise) relative to $h_A$.  
This is important, because it informs $A$ that all points on $\partial \c{P}_A$ (the boundary of $\c{P}_A$) in the clockwise (resp. counter-clockwise) direction from $S_A$ will never be misclassified by $B$ if $h_B$ rotates in the counterclockwise (resp. clockwise) direction from $h_A$, increasing the SOTA, and decreasing the SOU.
This logic is formalized in Lemma \ref{app-lem:dir-choice}. %Appendix \ref{app-sec:linsep}.  

If the set $S_A$ always has half of $U_A$ on either side, then this process will terminate in at most $O(\log(1/\eps))$ rounds.  But, it may have no points on one side, and always be forced to rotate towards the other side.  
%\jeff{Perhaps claim we can actually construct such an example.}  
Thus, the set $S_A$ is chosen judiciously to ensure that $|U_A|$ decreases by at least half each round.  

%\vspace{-0.15cm}
\subsection{Choice of Support Points}
\label{ssec:choice}
%\vspace{-0.15cm}
What remains to describe is how $A$ chooses a set $S_A$, i.e. how to implement the subroutine \csup in Algorithm \ref{alg:ouralgo}.  If the set $S_A$ always has half of $U_A$ on either side, then this process will terminate in at most $O(\log(1/\eps))$ rounds, via the consequences of no early-termination.  But, if no points are on one side of $S_A$, and $B$'s response always forces $h_A$ to rotate towards the other side, then this cannot be assured.  
Thus, the set $S_A$ should be chosen judiciously to ensure that $|U_A|$ decreases by at least half each round.  

We present two methods to choose $S_A$.  This first does not have the half-on-either-side guarantee, but is a very simple heuristic, and which we show in Section \ref{sec:expts} often works quite well, even in higher dimensions.  The second, is only slightly more complicated and is designed precisely to have this half-on-either-side guarantee.  Both methods start by computing the region of uncertainty $\c{U}_A$ and the set of its points $D_A$ which lie in that region $U_A$.  

%The first is called \supo, and simply chooses the max-margin support points as $S_A$.  These points may include points previously sent over by $B$, otherwise this response would not change.  
The first is called \supo, and simply chooses the max-margin support points as $S_A$.  These points may
include points sent over in previous iterations from $B$ to $A$. %, otherwise this response would not change.  

The second is called \oalgo, and is summarized in Algorithm \ref{alg:compsup} (shown from $A$'s perspective).  It projects all of $U_A$ onto $\partial \c{P}_A$ (the boundary of $\c{P}_A$); this creates a weight for each edge of $\partial \c{P}_A$, defined by the number of points projected onto it.  Then \oalgo chooses the weighted median edge $E$.   %(See Appendix \ref{app-sec:linsep} for straightforward, technical details on how to process these edges jointly for $\partial \c{P}_A^-$ and $\partial \c{P}_A^+$).  
Finally, the orientation of $h_A$ is set parallel to edge $E$, and the corresponding support vectors are constructed.

\begin{algorithm}[!htbp]
 \caption{\csup implemented as \oalgo}
 \begin{algorithmic}[1]
   \STATE \textbf{Input:} $D = D_A \cup \{S_B\}$
   \STATE \textbf{Output:} $S_A$ (a set of support points)
   %\STATE $\c{P}^+$ := convex hull of $D^+$; $\c{P}^-$ := convex hull of $D^-$;
   %\STATE $\c{E}^+$ := convex edge of $\c{P}^+$; $\c{E}^-$ := convex edge of $\c{P}^-$;
   %\STATE $\c{U}^+$ := SOU of $\c{P}^+$; $\c{U}^-$ := SOU of $\c{P}^-$;
   \STATE project points in $U_A$ onto $\partial \c{P}_A$;
   \STATE $E$ := weighted median edge of $\partial \c{P}_A$; 
   \STATE $h_A$ := classifier on $D$ parallel to edge $E$;
   \STATE $S_A$ := support points of $h_A$;
	 %\STATE $sgn := sign(S)$;
	 %\STATE $S'$ := extreme point on $\partial \c{P}^{-sign(S)}$ along direction $-v$;
	 %\STATE $S$ := $S \cup S'$;
 \end{algorithmic}
 \label{alg:compsup}
\end{algorithm}

\section{Analysis of \textsc{IterativeSupports}}
In this section, we formally prove the number of rounds required by \textsc{IterativeSupports} to converge.

\subsection{The Basic Protocol}
To simplify the exposition of the protocol, we start with a special case, where player $A$ must, through interaction with $B$, teach $B$ parameters of classifier that has at most $\eps$ error on $D_A^-$, as well as some (but not all) negative examples in $D_B$. 
%\jeff{above doesn't read well - revisit}
This case captures the bulk of the technical development of the overall protocol. 
In Section~\ref{sssec:overall} we will then describe how to extend the protocol to
\begin{inparaenum}[(a)]
\item ensure at most $\eps$ error on both positive and negative examples in $D_A$, and
\item be symmetric: have at most $\eps$ error on $D_A \cup D_B$
\end{inparaenum}

We will describe the protocol from the point of view of player $A$. Each round of communication will start with $A$ computing a classifier from its current state, and sending support points for this classifier to $B$. $B$ then performs some computation, and either terminates returning an $\eps$-error classifier, or returns a single bit of information to $A$. $A$ updates its internal state, completing the round.  
%, or terminates, returning the current classifier as the desired answer. 

% \jeff{changed $v_l,v_u$ and $v_1, v_2$ and $p_1,p_2$ to a consistent $v_l,v_r$ and $p_l,p_r$.  I think its convenient to think of \emph{left} and \emph{right} even if we do not explicitly define them as such.  More intuitive than $1$, $2$.}

%\begin{wrapfigure}{r}{0.27\textwidth}
\begin{figure}
\centering %\vspace{-.1in}
\includegraphics[width=.8\columnwidth]{figures/init.pdf}
\label{app-fig:init}
\end{figure}
%\end{wrapfigure}

\paragraph{Internal state.}
At any stage, $A$ maintains an interval of directions $(v_l, v_r) \subset \mathbb{S}^1$ where by convention, we go clockwise from $v_l$ to $v_r$. This interval represents $A$'s current bound on the possible directions normal to an $\eps$-optimal classifier based on all conversation with $B$ up to this point. $A$ also maintains $\c{C}_A$ (recall that $\c{C}_A$ is the convex hull of the \sota) as well as the set of points $U_A$ that form the \sou. By Lemma~\ref{app-lem:U-convex}, we know that $\c{P}_A = \c{C}_A \cup \c{U}_A$, and therefore there exist a pair of points $\{p_l, p_r\}$ on $\c{P}_A$ whose supporting line segment separates $\c{C}_A$ and $\c{U}_A$. $A$ maintains this pair as well; in fact, $v_l$ and $v_r$ represent outward normals to $\c{P}_A$ at $p_l$ and $p_r$.

\paragraph{(1) $A$'s move:}
$A$ projects all points in $U_A$ onto the boundary of $\c{P}_A$, denoted $\partial \c{P}_A$, (the projection is orthogonal to the edge through $\{p_l,p_r\}$).  Each edge in $\partial \c{P}_A$ is weighted by how many points are projected to it (with boundary points being assigned arbitrarily to one of the two incident edges).  We select the two points on the boundary of edge $e$ which is the weighted median, and place these points in a set $S$.  
The normal direction to $e$ is $v$.  And the extreme positive point in $D_A$ along direction $-v$ is also placed in $S$.  Now the classifier $h_A$ is the max-margin separator of $S$, has $0$ error on $D_A$, and is parallel to $e$.  
Then $A$ sends $(v_l,v_r,v,S)$ to $B$.

%These points now form an ordered sequence in between $p_l$ and $p_r$. 
%Pick the median  of this sequence (if there are two, pick either arbitrarily); call it $p$. Determine an outward normal $v$ to $\c{P}_A$ at $p$; note that $v$ lies between $v_l$ and $v_r$ on $\mathbb{S}^1$. Let $h_A$ be the max-margin classifier restricted to direction orthogonal to $v$; determine its (at most three) support points $S$ with respect to $P_A$. Note that $p$ must be one of the support points. $A$ now sends $(v_l, v_r, v, S)$ to $B$.

%\jeff{Note that if we first choose $v$, and then choose the points $S$ as described, it does not necessarily define a max-margin classifier with direction orthogonal to $v$.  However, it does define a max-margin classifier with direction normal to \emph{some} direction normal to $p$, and this is sufficient, since it still will invoke a decision about $p$ as the median point.   May need to change writing a bit - but I'll leave this for next pass, since it may make a slightly more obfuscated at this stage.}

\paragraph{(2) $B$'s move:}
$B$ receives $(v_l,v_r,v,S)$ from $A$. It then determines whether there exists a classifier $h_B$ with normal $v$ within the margin defined by $S$ that correctly classifies all but an $\eps$-fraction of points in $B$. If so, $B$ sends $(h_B, 0)$ to $A$ and terminates, returning $h_B$. 
Suppose that such a classifier does not exist. Then by Lemma \ref{app-lem:dir-choice}, any 0-error classifier for $D_B$ must have a normal either in the interval $(v_l, v)$ or $(v, v_r$). If the former, $B$ returns $(+1)$ to $A$, else it returns $(-1)$.

\paragraph{(3) $A$'s update:}
If $A$ receives $(h, 0)$ from $B$, the protocol has terminated, returning $h$. 
If $A$ receives $(+1)$, it then updates its interval of directions to be $(v_l, v)$ and sets the support pair separating $\c{C}_A$ and $\c{U}_A$ to $(p_l, p)$. Similarly, if it receives $(-1)$, it updates the interval of directions to $(v, v_r)$ and sets the support pair to $(p, p_r)$. In both cases, it adds $p$ to $C_A$, updating $\c{C}_A$ accordingly.

%\paragraph{Initialization.}
%There are exactly two distinct classifiers that have zero error on $D_A$ while also touching $\c{P}_A^+$ and $\c{P}_A^-$, they each touch (in a non degenerate setting) one point on each of these polygons. $A$ computes the two classifiers and initializes $v_l$ and $v_r$ to be the normals for these classifiers, selecting $p_l$ and $p_r$ accordingly as the points on $\c{P}_A^-$ touched by the two classifiers.

%%%%%%%%%%%%%%%%%%%%%%%%%%%%%%%%%%%%%%%%%%%%%
%\begin{wrapfigure}{r}{0.25\textwidth}
\begin{figure}
\begin{center} %\vspace{-.3in}
\includegraphics[width=.6\columnwidth]{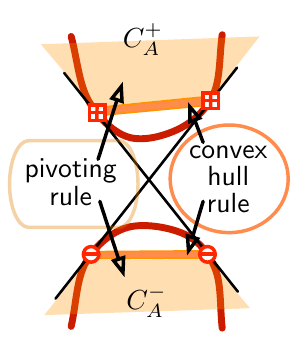}
\end{center} %\vspace{-.18in}
\label{app-fig:CH+pivot}
\end{figure}
%\end{wrapfigure}

\subsection{Structural Analysis}
In this section we provide structural results about $\c{C}_A$ and prove Lemma \ref{app-lem:U-convex} and Lemma \ref{app-lem:dir-choice}.  The first challenge is to reason about the set of total agreement -- what points can not be misclassified.  Then we can argue that $\sota = C_A \cap D_A$.  We use two technical tools, the convex hull and a pivoting argument.  
Let $W = \bigcup_i S_i$ be the union of all $S_i$ sent in round $i$ from $A$ to $B$.  

\begin{description}
\item[\textsf{Convex Hull:}] Let $\c{K}^- = \c{C}(W^-)$ be the convex hull of all the negative points sent by the protocol so far.  No negative points $p \in \c{P}_A^-$ can be misclassified if $p \in \c{K}^-$.  So $\c{K}^- \cap \c{P}_A^- \subset \c{C}_A^-$.  The same rule holds for positive points.  
\item[\textsf{Pivoting:}] Consider any point $q \in \c{P}_A^-$.  If any edge from $q$ to any point $p \in \c{K}^+$ intersects $\c{K}^-$, then $q$ cannot be misclassified -- otherwise a classifier which was correct on $p$ (and incorrect on $q$) would have to be incorrect on some negative point in $\c{K}^-$.  This identifies another part of $\c{P}_A^-$ as being in $\c{C}_A$, intuitively the region ``behind'' $\c{K}^-$.  
Note that the early-termination rotation argument, along with this pivoting rule, each round excludes from $U$ all points on one of two sides of the support points in $S$.  
\end{description}

We now have the tools to prove the two key structural lemmas needed for our protocol.

\begin{lemma}
\label{app-lem:dir-choice}
Consider when $B$ does not terminate.  
If $B$ returns $(+1)$, then $A$ can update its range to $(v_l,v)$.
If $B$ returns $(-1)$, then $A$ can update its range to $(v,v_r)$.
\end{lemma}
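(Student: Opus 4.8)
The plan is to prove the lemma by combining the early-termination characterization of $B$'s move with the convex-hull and pivoting structure established just above, and then certifying soundness against the assumed global $0$-error classifier. First I would unpack the hypothesis that $B$ does not terminate: by the early-termination rule, this means that \emph{no} line with normal $v$ lying in the margin slab defined by $S$ (equivalently, no shift of $h_A$ that preserves $0$-error on $D_A$) attains $\eps$-error on $D_B$. I would then localize the reason for this failure to one of the two obstructions already singled out in the discussion around Figure~\ref{fig:early-term}: either (i) some point $q \in D_B$ lies strictly on the wrong side of the slab and is therefore misclassified by every in-margin normal-$v$ line, or (ii) a positive and a negative point of $D_B$ both fall inside the margin in an order that no normal-$v$ line can separate.

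Next, for each obstruction I would show that correcting it forces the separating line to \emph{rotate}, and that the sense of rotation is uniquely determined by the obstruction's position relative to the slab. Concretely, I would track the two margin boundaries of $h_A$ (the one through the median-edge support points of $S$, and the one through the extreme positive support point) and argue by monotonicity: rotating the normal clockwise from $v$ moves one boundary toward the offending point or pair and the opposite boundary away, while rotating counter-clockwise does the reverse. Hence exactly one rotation sense can ever fix a given obstruction; the other strictly worsens it. This is the step where I expect the real work to lie, since it requires a careful case analysis of where $q$ (or the bad $+/-$ pair) sits with respect to $\partial \c{P}_A$ and the support segment $(p_l,p_r)$, and it is precisely the local computation $B$ performs to decide whether to emit $+1$ or $-1$.

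Finally I would close the argument using the pivoting rule and the noiseless assumption to guarantee soundness of $A$'s update. Because a $0$-error classifier on $D$ exists and, by the invariant maintained in $A$'s internal state (Lemma~\ref{app-lem:U-convex} together with the way $(v_l,v_r)$ was built in earlier rounds), its normal $v^{*}$ lies in $(v_l,v_r)$, this classifier must in particular handle the obstruction correctly; by the monotonicity of the preceding paragraph it can only do so with $v^{*}$ on the single admissible side of $v$. Thus when $B$ returns $+1$ the entire half-cone $(v,v_r)$ is free of valid directions and $A$ may safely contract its range to $(v_l,v)$, and symmetrically for $-1$. I would also remark that the pivoting observation certifies that the points of $U_A$ on the discarded side are absorbed into $\c{C}_A$ and can no longer be misclassified, which is what makes this range contraction correspond to an honest decrease of the \sou rather than a mere bookkeeping step.
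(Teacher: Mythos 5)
Your proposal follows essentially the same route as the paper's proof: both reduce the failure of early termination to a witness obstruction in $D_B$ (a stranded point or an inseparable positive/negative pair within the margin), observe that any separator correct on that witness together with $S$ must cross the offending segment and hence rotate from $v$ in one uniquely determined sense, and then invoke the existence of a $0$-error direction in $(v_l,v_r)$ to justify contracting the range to one side. The paper compresses your case analysis into a single positive/negative pair $y,s \in Y \cup S$ and the segment-crossing observation, but the underlying argument is the same, so your plan is correct.
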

%\vspace{-.2in}
\begin{proof}
When $B$ can not produce an $\eps$-error separator parallel to $h_A$ and within the margin provided by $S$, that implies for any such classifier some points from $D_B$ must be misclassified.  
Furthermore, $B$ can present points $Y \subset D_B$ that along with $S$ violate any classifier orthogonal to $v$.  Let $y,s \in Y \cup S$ be a negative and positive point, respectively, one of which any classifier orthogonal to $v$ will misclassify.  
%Use $y$ to expand $\c{C}_A^+$ and $\c{K}^+$, and $s$ to expand $\c{C}_A^-$ and $\c{K}^-$ by the convex hull rule.  
Then any linear separator classifying $s$ and $y$ correctly must intersect the edge between $s$ and $y$, and thus must rotate from direction $v$ clockwise or counter-clockwise.  This excludes directions in either $(v_l,v)$ or $(v,v_r)$ and allows $B$ to return $(+1)$ or $(-1)$, accordingly.  
\end{proof}

\begin{lemma}
\label{app-lem:U-convex}
After $A$ has updated its state (step (3)), then $\c{U}_A$ is convex.
\end{lemma}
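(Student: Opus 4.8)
We need to show that after $A$ updates its state in step (3), the region of uncertainty $\c{U}_A$ is convex. Recall $\c{U}_A = \c{P}_A \setminus \c{C}_A$, where $\c{P}_A$ is a polytope containing the relevant points and $\c{C}_A$ is the convex hull of the set of total agreement. So $\c{U}_A$ is a difference of two convex sets, which is generally *not* convex; the content of the lemma is that the particular way $\c{C}_A$ grows forces the leftover region to be convex.

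**Plan of attack.** My plan is to argue that at every stage there is a single supporting line segment (through the maintained pair $\{p_l, p_r\}$ on $\partial \c{P}_A$) that separates $\c{C}_A$ from $\c{U}_A$, and that $\c{U}_A$ is exactly the intersection of the convex polytope $\c{P}_A$ with the closed halfplane on the $\c{U}_A$-side of this line. An intersection of two convex sets (a polytope and a halfplane) is convex, so this would immediately give the result. The work is therefore to establish that such a single separating line segment always exists and that the update in step (3) preserves this structure.

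**Key steps, in order.** First I would set up an induction on the rounds. In the base case, before any agreement points are accumulated, $\c{C}_A$ is empty (or a single point/edge) and $\c{U}_A = \c{P}_A$ is convex; the supporting pair $\{p_l, p_r\}$ coincides with the extreme points in directions $v_l, v_r$. For the inductive step, I would assume that before the current round $\c{U}_A = \c{P}_A \cap H$ for a halfplane $H$ bounded by the line through $\{p_l, p_r\}$, with outward normals $v_l$ and $v_r$ to $\c{P}_A$ at those two points. Then I would trace through step (3): when $B$ returns $(+1)$, the direction interval shrinks from $(v_l, v_r)$ to $(v_l, v)$, the new support pair is $(p_l, p)$, and $p$ is added to $\c{C}_A$. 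The point $p$ is exactly the extreme point of $U_A$ in direction $v$ (the weighted-median edge normal from $A$'s move), so the new separating segment runs through $p_l$ and $p$. I would verify that the cut made by the pivoting/early-termination argument (invoked via Lemma~\ref{app-lem:dir-choice}) removes precisely the sub-cap of $\c{U}_A$ on one side of the chord from $p$, so that what remains is again $\c{P}_A$ intersected with a single halfplane whose boundary passes through the updated support pair. The symmetric case for $(-1)$ is identical with $(p, p_r)$.

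**Main obstacle.** The delicate point — and where I expect most of the effort to go — is verifying that the region excluded in one round is exactly a single ``cap'' cut off by one chord, rather than two disconnected pieces or a non-convex bite. This is where the geometry of the pivoting rule and the rotation (clockwise/counter-clockwise) argument must be used carefully: I must show that all newly-agreed points lie on one contiguous arc of $\partial \c{P}_A$ adjacent to the previously-agreed region, so that $\c{C}_A$ remains convex and the frontier between $\c{C}_A$ and $\c{U}_A$ stays a single segment. Concretely, the pivoting observation guarantees that the region ``behind'' the newly added point $p$ (together with the existing $\c{C}_A$) is entirely absorbed into the set of total agreement, and the key is that ``behind $p$ relative to $\c{C}_A$'' is a halfplane-bounded region whose boundary chord shares the endpoint $p_l$ (or $p_r$) with the old frontier. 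Once this contiguity is pinned down, convexity of $\c{U}_A$ follows since it is the intersection of the convex $\c{P}_A$ with a single closed halfplane.
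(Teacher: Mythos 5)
Your proposal is correct and follows essentially the same route as the paper: the paper's proof likewise argues that the convex-hull rule places the chord through $\{p_l,p_r\}$ in $\c{C}_A$, the pivoting rule absorbs everything ``behind'' that chord into $\c{C}_A$, and hence $\c{U}_A$ is the intersection of the convex polytope $\c{P}_A$ with a single halfplane. Your inductive bookkeeping over rounds just makes explicit what the paper leaves implicit.
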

%\vspace{-.2in}
\begin{proof}
First consider the two negative points $\{p_l,p_r\}$.  Using the convex hull rule, the edge $e_{12}$ between them is in $\c{C}_A$.  
And because the points $\{p_l,p_r\}$ are defined as the extremal points for the range $(v_l,v_r)$ under the pivoting rule, everything ``behind'' them in $\c{P}_A$ is also in $\c{C}_A$.  
Thus, $\c{C}_A$ is partitioned from $\c{U}_A$ by the line passing through the edge $e_{12}$, implying that $\c{U}_A$ is convex.  
\end{proof}

%%%%%%%%%%%%%%%%%%%%%%%%%%%%%%%%%%%%%%%%%%%%%%%%%%%%%
\subsection{Extending The Basic Protocol}
\label{sssec:overall}
%\vspace{-.1in}

The simplified protocol above captures the spirit of $A$'s perspective of the algorithm on its negative points.  But to show it converges, we need to extend these techniques to also handle positive points and to make it symmetric from $B$'s perspective.  

\paragraph{Handling positive and negative instances simultaneously.} 
In each round of the basic protocol $U_A^-$ reduces in cardinality by at least half.  
We now describe how to modify the protocol so that the entire set $U_A = U^-_A \cup U^+_A$ is reduced in cardinality by half.  Recall that in step (1) of the basic protocol, $A$ projects all points in $U^-_A$ to the boundary of $\c{P}^-_A$ and determines a edge of the boundary that splits the set in half.  In addition now we project all points in $U^+_A$ to the boundary of $\c{P}^+_A$ as well. 
We can consider the normal direction of each edge in $\partial \c{P}^-_A \cap \c{U}^-$ or in $\partial \c{P}^+_A \cap \c{U}^+$ and map it to a point on $\b{S}^1$.  

%\begin{wrapfigure}{r}{0.6\textwidth}
\begin{figure}
\centering %\vspace{-.15in}
\includegraphics[width=.9525\textwidth]{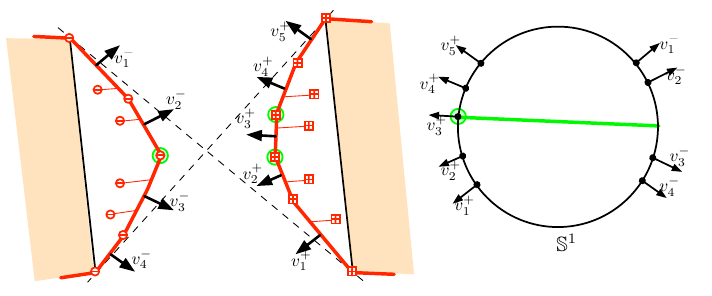}
%\vspace{-.25in}
%\label{fig:gauss-sort} 
\end{figure}
%\end{wrapfigure}

We can now scan both sets of normal directions on $\b{S}^1$ simultaneously by interleaving the order of directions from $\partial \c{P}^-_A \cap \c{U}^-$ with the antipodal directions from $\partial \c{P}^+_A \cap \c{U}^+$.  We again find the weighted median direction, corresponding to an edge, now among all negative and positive directions.  The set $S_A$ now consists of the two points defining the median edge as well as the point incident upon the two edges with normal directions on either side of the antipodal direction of the median edge.  

%\begin{wrapfigure}{r}{.5\textwidth}
%\begin{center} \vspace{-.2in}
%\includegraphics[width=.5\textwidth]{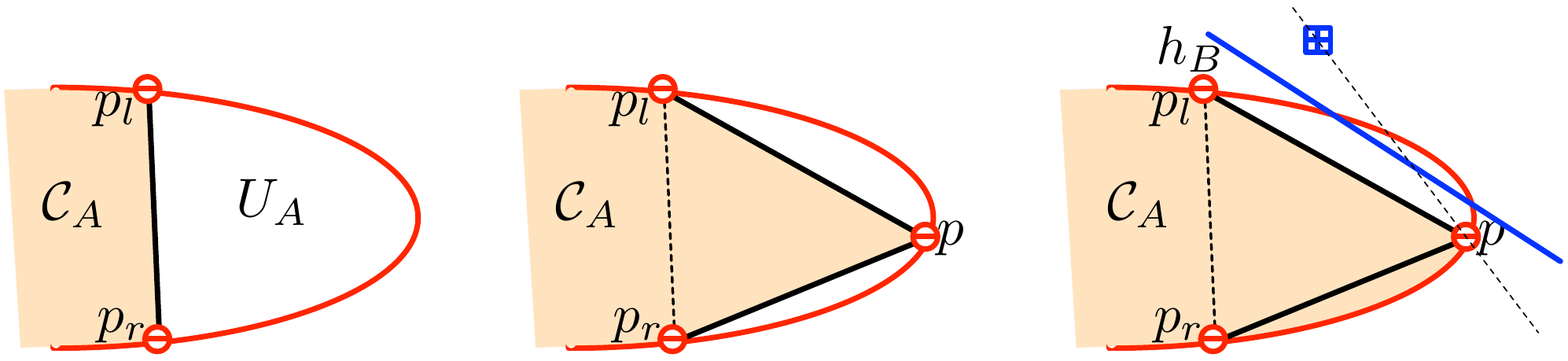}
%\label{fig:median}  %%%% KEEP IN FULL VERSION FOR PROOF BELOW
%%\caption{\sffamily \small Illustration of the full iterative phase from the perspective of $A$'s negative points.  After projecting to $e$ (solid black edge), the median point $p$ is added to $S$ and marked.  It is shown how adding this median point $p$ to $S$ increases $C$.  Then either one of two sides of $p$ are added to $\c{C}_A$ based on $h_B$ in the next round.}
%\end{center}
%\end{wrapfigure}

As before, this splits the regions of uncertainty into two convex regions on each polytope. The bit returned by $B$ will guarantee that one region on each polytope will be eliminated, and by the above construction, this guarantees that we reduce the size of $U_A$ by a factor of two in each round. 

\begin{lemma}
\label{app-lem:split}
Over the course of a single round, the size of $U_A$ decreases by at least half.  
\end{lemma}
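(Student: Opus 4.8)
The plan is to show that the weighted-median construction on the interleaved circle of normal directions forces $B$'s single-bit response to eliminate at least half of $U_A = U_A^- \cup U_A^+$ in each round. The key conceptual move, already set up in the excerpt, is to treat the positive and negative regions of uncertainty uniformly by mapping edge normals to points on $\mathbb{S}^1$ and \emph{antipodally identifying} the positive directions with the negative ones. I would formalize this as a single weighted point set on $\mathbb{S}^1$ (or equivalently on $\mathbb{RP}^1$) in which each point of $U_A^-$ contributes weight to the edge of $\partial\c{P}_A^-$ it projects onto, and each point of $U_A^+$ contributes weight to the \emph{antipodal} direction of the edge of $\partial\c{P}_A^+$ it projects onto. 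The weighted median edge $E$ in this combined ordering then has the property that both the clockwise and counter-clockwise arcs from its normal $v$ carry at most half the total weight $|U_A|$.

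\medskip
First I would invoke Lemma~\ref{app-lem:dir-choice}: when $B$ does not terminate, it returns a single bit telling $A$ that every $0$-error classifier for $D_B$ has normal in $(v_l, v)$ or in $(v, v_r)$, i.e. that the admissible rotation is forced strictly to one side of $v$. Second, I would argue that a rotation of $h_A$ to one side, together with the convex-hull and pivoting rules from the Structural Analysis, moves \emph{all} points of $U_A$ lying on the opposite side of the median edge into the set of total agreement: by Lemma~\ref{app-lem:U-convex} each $\c{U}_A^\pm$ is convex and is cut by the line through the median edge into two pieces, and the pivoting argument shows the piece ``behind'' the new supporting pair can never be misclassified thereafter, hence leaves $U_A$. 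Third, and this is the crux, I would verify that the antipodal interleaving makes the bit simultaneously eliminate the correct half on \emph{both} polytopes at once: because a counter-clockwise rotation of the separator corresponds to a clockwise rotation of its normal \emph{relative to the positive side}, the antipodal identification guarantees that the single direction $v$ and the single bit consistently designate ``the half to discard'' on $\c{P}_A^-$ and on $\c{P}_A^+$ with the same sign. Combining this with the weighted-median property (each side has weight at most $|U_A|/2$) yields that the surviving side has cardinality at most $|U_A|/2$.

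\medskip
\textbf{The main obstacle} I anticipate is precisely this last step: checking that the geometric sign of rotation is consistent across the positive and negative polytopes under the antipodal map. A separator rotating counter-clockwise increases the region of total agreement on one side of $\c{P}_A^-$ but, for the positive points, the relevant exclusion happens on the \emph{opposite} angular side, which is exactly why the positive normals are entered antipodally; I would need to confirm that this alignment is exact rather than off by a reflection, by carefully tracking the outward-normal convention (clockwise from $v_l$ to $v_r$) established in the Internal State paragraph. A secondary subtlety is boundary handling: points projecting exactly onto a vertex, or the two support points of the median edge itself, must be accounted for so that the ``at most half'' bound is not violated by a constant additive slack; since $S_A$ is defined to contain the median-edge endpoints plus the incident positive point, I would note these $O(1)$ points are charged to the discarded side, which does not affect the asymptotic halving. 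Once the sign-consistency is pinned down, the inequality $|U_A^{\text{new}}| \le |U_A|/2$ follows immediately from the definition of the weighted median, completing the proof of Lemma~\ref{app-lem:split}.
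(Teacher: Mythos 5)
Your proposal matches the paper's argument essentially exactly: the paper does not give a formal proof of Lemma~\ref{app-lem:split}, but the surrounding prose makes precisely your case --- project $U_A^\pm$ onto $\partial\c{P}_A^\pm$, interleave the negative edge normals with the antipodal positive ones on $\b{S}^1$, take the weighted median so each side carries at most half of $|U_A|$, and use the bit from Lemma~\ref{app-lem:dir-choice} together with the convexity and pivoting rules to discard one side on each polytope. Your treatment of the antipodal sign-consistency and boundary points is in fact more careful than the paper's own exposition.
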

%%% KEEP PROOF IN FULL VERSION, BUT COMMENT IN SUBMITTED FOR SPACE
%\begin{proof}
%Any point in $U_A^-$ can be placed on one of three regions $L^-$, $R^-$, or $T^-$.  
%$T^- = \c{C}(\{p_l,p_r,p\})$.  These points are always in $C_A^-$ after the round by the convex hull rule.  
%$L^-$ (resp. $R^-$) is the region defined by the halfspace with boundary through $\{p_l,p\}$ (resp. $\{p,p_r\}$) intersected with $\c{P}^-_A$.  The same regions are defined symmetrically for positive points.  Each of $L^- \cup L^+$ or $R^- \cup R^+$ contain at most half of the points in $U_A$ by choosing the median edge.  By Lemma \ref{app-lem:dir-choice} one of the two ($L^-$ and $L^+$ or $R^-$ and $R^+$) must be $C^-_A$ and $C^+_A$ respectively after the round by the pivoting rule.  Only the points in the other of the two regions, $L$ or $R$, can remain in $U_A$, completing the proof.  
%\end{proof}

\paragraph{Reducing the \sou for both $A$ and $B$.}
The basic protocol and its extension described above only reduce the \sou for $A$. Since $B$ decides termination, it is possible that the error of the resulting classifier on $B$ never reduces sufficiently. While we could run protocols in parallel for $A$ and $B$, this could result in classifiers $h_A$ and $h_B$ that do not have $\eps$-error on the \emph{entire} data set $D_A \cup D_B$. 

The solution is for $B$ to send more information back to $A$. Consider step (2) of the basic protocol. $B$ receives a support set $S_A$ from $A$, as well as the set of directions $v_l, v, v_r$  and determines which of the intervals $(v_l, v)$ and $(v, v_r)$ the direction of a $0$-error classifier $h_B$ on $D_B$ must lie in. Now instead of merely sending back a bit, $B$ also sends back a support set $S_B$ corresponding to $h_B$, as well as its own directions $(v'_l, v'_r, v')$.  $A$ now uses the support set $S_B$ to update its own \sota and \sou, completing the round. 
%Note that since $A$ uses $S_B$ to update its classifier, we are guaranteed that any subsequent classifier $A$ sends to $B$ will never increase $B$'s \sou.  
%
Notice that now $B$'s transmission to $A$ in step (2) of the protocol is identical to $A$'s transmission that initiates step (2)!  
Thus all future separators proposed by $A$ or $B$ must correctly classify the same set of points in the full protocol transcript.  
%Thus, the basic protocol (with the above extension for positive and negative examples) can be generalized to the single step: Prune $\c{U}_A$ based on $S_B$ and pivoting rule, project all points in $U_A$ to $\c{P}_A$, and sort to find median edge which is used to define $S_A$.  

%-----------------------------------------------------------------------------------------------
\subsection{Complexity Analysis}

\begin{theorem}
\label{thm:rounds}
The $2$-player two-way protocol for linear separators always terminates in at most $O(\log (1/\eps))$ rounds, using at most $O(\log (1/\eps))$ communication.  
\end{theorem}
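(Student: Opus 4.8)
The plan is to obtain the round bound as a short corollary of the per-round shrinkage guarantee in Lemma~\ref{app-lem:split}, and then translate rounds into communication by noting that each round carries only a constant-size message. The whole argument is geometric-decay bookkeeping on the size of the \sou, once Lemma~\ref{app-lem:split} supplies the factor-of-two contraction.

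First I would introduce a potential. Let $n = |D|$ and let $u_A^{(t)} = |U_A|$ and $u_B^{(t)} = |U_B|$ denote the cardinalities of the two sets of uncertainty after $t$ rounds. Before any communication the region of uncertainty is the whole polytope, so $u_A^{(0)} \le |D_A| \le n$ and $u_B^{(0)} \le |D_B| \le n$. Lemma~\ref{app-lem:split} states that a single round shrinks $U_A$ by at least half, and the symmetrized version of the protocol (the ``Reducing the \sou for both $A$ and $B$'' construction, in which $B$'s reply is made identical in form to $A$'s message) gives the same contraction for $U_B$. Iterating, $u_A^{(t)} \le n/2^{t}$ and $u_B^{(t)} \le n/2^{t}$.

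Next I would fix the stopping time $t^\star = \lceil \log_2(1/\eps) \rceil$, which forces $u_A^{(t^\star)} \le \eps |D_A|$ and $u_B^{(t^\star)} \le \eps |D_B|$. I then connect this to termination. By the structural lemmas the \sota is never misclassified by any transcript-consistent classifier, so a separator can err on $D_A$ only within $U_A$ and on $D_B$ only within $U_B$; since $\sod \subseteq \sou$ this means that once $|U_A| \le \eps|D_A|$ the proposed separator misclassifies at most an $\eps$-fraction of $D_A$, and symmetrically for $D_B$. The symmetrization step is what lets me combine these: because $B$'s transmission is engineered to be identical in form to $A$'s, both players commit to the same support constraints, and every separator consistent with the transcript classifies the same committed points correctly. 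Hence the single classifier returned at termination has at most $\eps|D_A| + \eps|D_B| = \eps|D|$ total error on $D = D_A \cup D_B$, and the protocol is guaranteed to hit its exit condition within $t^\star = O(\log(1/\eps))$ rounds.

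Finally I would count communication. In each round $A$ sends the tuple $(v_l, v_r, v, S)$ -- two interval endpoints and a query direction on $\b{S}^1$, plus the constant-size support set $S$ -- and $B$ answers with one bit and a symmetric constant-size packet $(S_B, v'_l, v'_r, v')$. This is $O(1)$ per round, so the total over $O(\log(1/\eps))$ rounds is $O(\log(1/\eps))$. The step I expect to require the most care is the combination in the previous paragraph: proving that independently shrinking $U_A$ and $U_B$ yields a \emph{single} classifier with $\eps$-error on the \emph{whole} of $D$ rather than two different classifiers each good on only one part. This rests entirely on the symmetrization argument, and I would make sure the transcript-consistency claim is airtight before declaring the two error bounds additive; the remaining geometric-series estimates are routine once Lemma~\ref{app-lem:split} is in hand.
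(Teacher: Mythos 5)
Your proposal is correct and follows essentially the same route as the paper: invoke Lemma~\ref{app-lem:split} to get a factor-of-two contraction of the \sou per round for both players, conclude that after $O(\log(1/\eps))$ rounds $|U_A| \leq \eps|D_A|$ (and symmetrically for $B$) so the early-termination condition must fire, and multiply by the constant per-round message size. Your extra care about why the terminating classifier is good on all of $D$ (via the symmetrized transcript) is a more explicit rendering of what the paper leaves implicit in its early-termination condition, not a different argument.
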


\begin{proof}
By Lemma~\ref{app-lem:split} we know that as each round shrinks the region of uncertainty $\sou$ by half of its current size for both $A$ and $B$.  And we keep doing this until $|U_A| \leq \eps |D_A|$ or $|U_B| \leq \eps |D_B|$, then the early-termination condition must be reached. This can be achieved in $O(\log (1/\eps))$ rounds.
\end{proof}

%\begin{figure*}[!htbp]
%\centering \vspace{-1.9in}
%%\includegraphics[width=.7\textwidth]{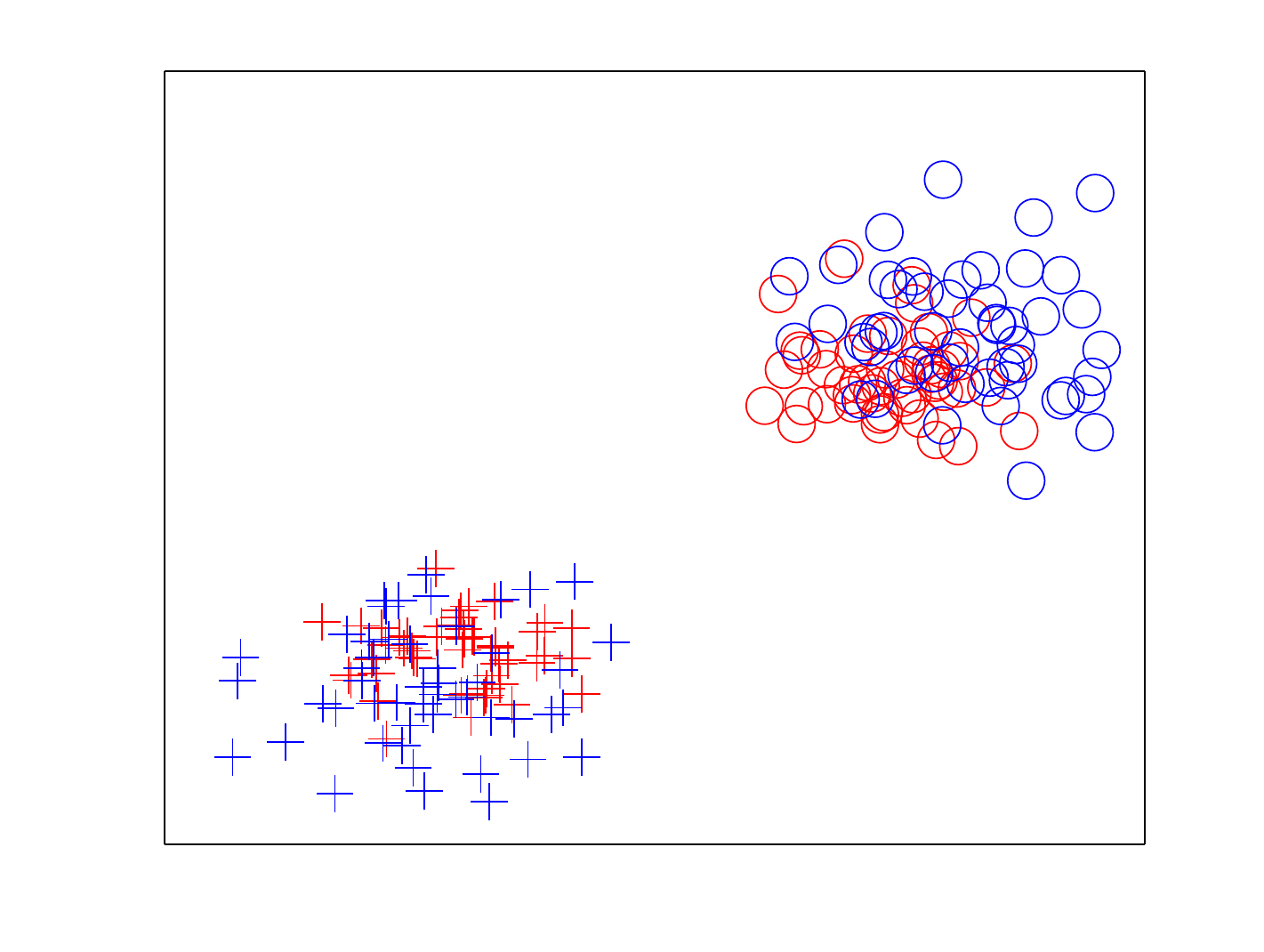} \hspace{.1in}
%\hspace{0.15in}
%\includegraphics[width=7cm]{plots/dataset1.pdf} \hspace{-0.95in}
%%\includegraphics[width=.7\textwidth]{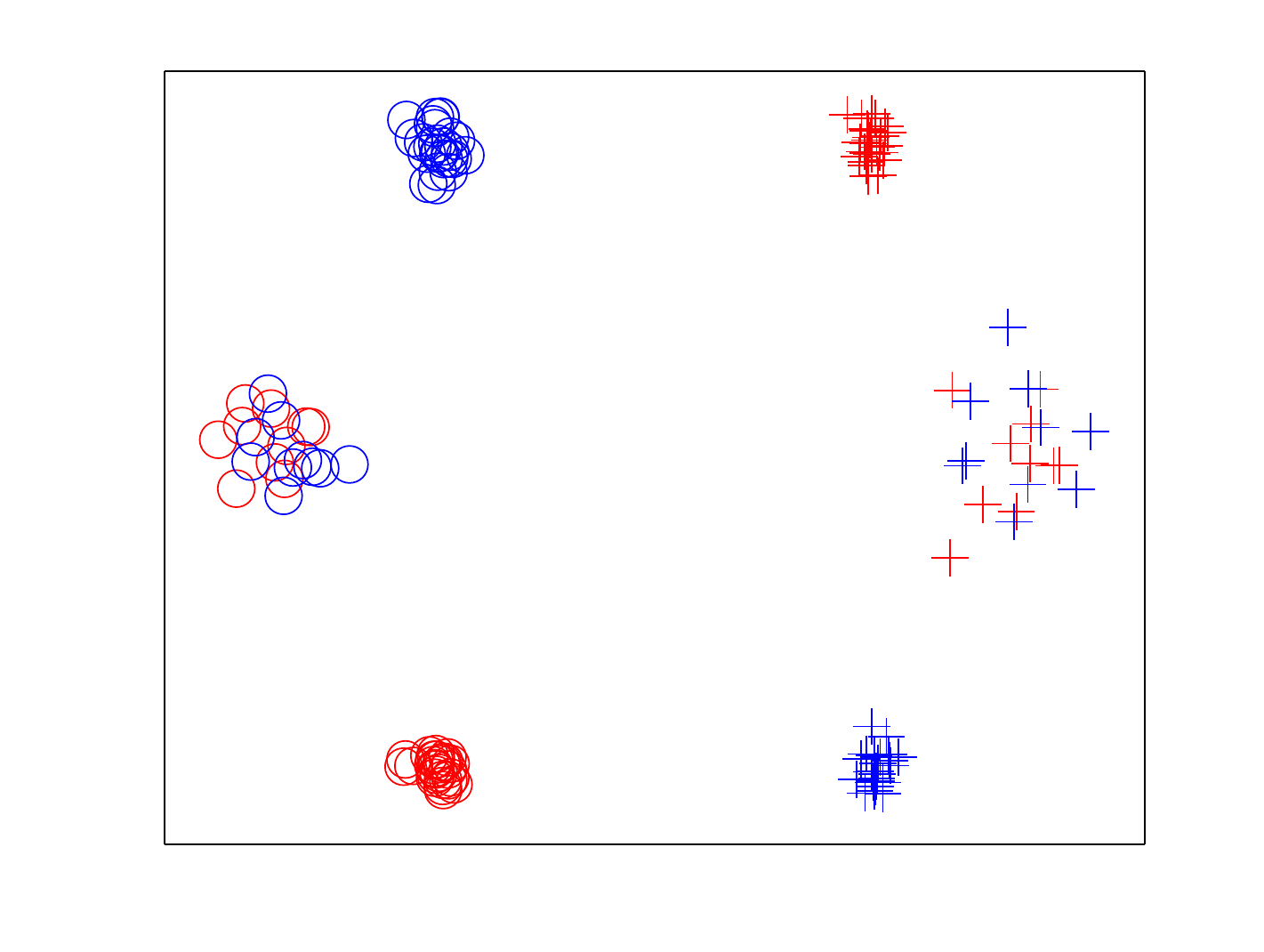} \hspace{.1in}
%\includegraphics[width=7cm]{plots/dataset2.pdf} \hspace{-0.95in}
%\includegraphics[width=7cm]{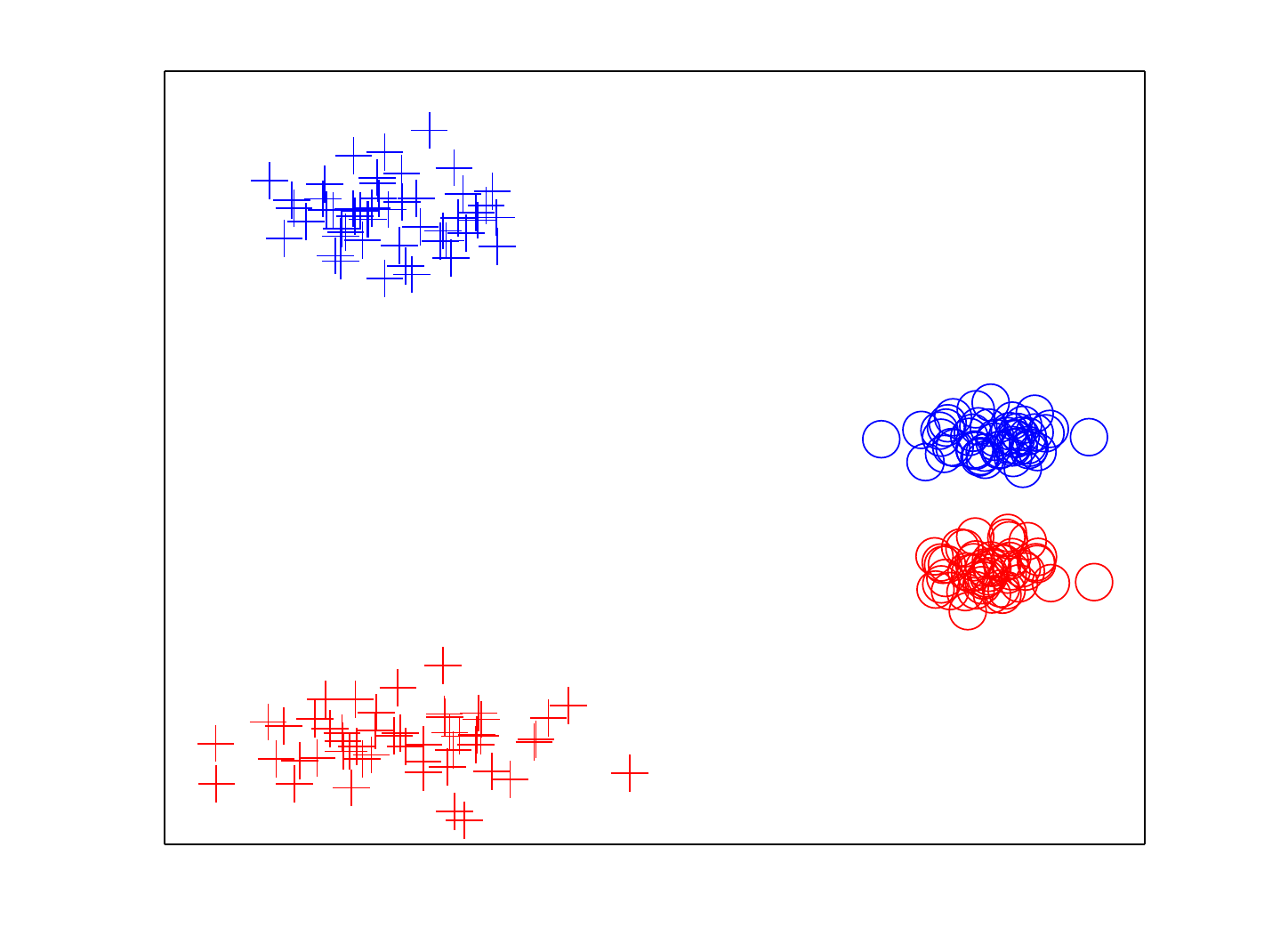}
%\\
%\vspace{-.20in}
%\hspace{-0.20in}
%(a) \done  \hspace{0.8in}
%(b) \dtwo  \hspace{0.8in}
%(c) \dthr	 \hspace{0.1in}
%\vspace{-.1in}
%\caption{Datasets \done, \dtwo and \dthr. Red points represent $A$, and blue points represent $B$. The
%	positive and negative examples (for both $A$ and $B$) are denoted by `$\circ$'s and `$\square$'s, respectively.}
%\label{fig:data}
%\end{figure*}

%===========================================================================================================

\section{\emph{Multiparty}}
\label{sec:kparty}
%\section{$k$-Party Protocols for Linear Separators}
In the noiseless setting, extending from a two-party protocol to a $k$-party (where data is distributed to
$k$ disjoint nodes) can be achieved by allowing an additional factor $k$ or $k^2$ communication, depending on the hypothesis class.  

\subsection{One-way Protocols}

For $k$-players one-way protocols pre-determine an ordering among players $P_1 < P_2 < \ldots < P_k$, and all communication goes from $P_i$  to $P_{i+1}$ for $i \in [1,k-1]$.  In this section, we show that for $k$-players, $\eps$-error classifiers can be achieved even with this restricted communication pattern.  All discussed protocols can also be transformed into hierarchical one-way protocols that may have certain advantages in latency, or where all nodes just send information one-way to a predetermined coordinator node.  

\paragraph{Sampling results for $k$-players.}
In sampling-based protocols, along the chain of players, player $P_i$ maintains a random sample $R_i$ of size $O((\nu/\eps) \log (\nu/\eps))$ from $\bigcup_{j=1}^i D_i$ and the total size $m_i = \sum_{j=1}^i |D_i|$.  This can be easily achieved with reservoir sampling~\citep{vitter85reservoir}.  The final player $P_k$ computes and returns a $0$-error classifier on $R_{k-1} \cup D_k$.  
%If we use the one-way protocols based on sampling, then we can achieve $\eps$-error in $O(k(\nu/\eps) \log (\nu/\eps))$ communication.  
%Each node sends a sample of size $s_{\eps} = O((\nu/\eps) \log (\nu/\eps))$ to a coordinator, and the coordinator returns an $0$-error classifier on all of the data.  
\begin{theorem}
\label{thm:generickway}
Consider any family of hypothesis $(\b{R}^d, \c{A})$ that has VC-dimension $\nu$.  Then there exists a one-way $k$-player protocol using $O(k (\nu/\eps) \log (\nu/\eps))$ total communication that achieves $\eps$-error, with constant probability.  
\end{theorem}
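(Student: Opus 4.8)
The plan is to reduce this to the single-sample guarantee of Theorem~\ref{thm:generic1way}, applied not to one player's data but to the union $D' = \bigcup_{j=1}^{k-1} D_j$ accumulated along the chain. The protocol is exactly the reservoir-sampling chain sketched just before the statement: each player $P_i$ holds a pair $(R_i, m_i)$, where $R_i$ is meant to be a uniform random sample of size $s_\eps = O((\nu/\eps)\log(\nu/\eps))$ from $\bigcup_{j=1}^i D_j$ and $m_i = \sum_{j=1}^i |D_j|$ is its exact cardinality, and forwards $(R_i, m_i)$ to $P_{i+1}$. The last player $P_k$ outputs a $0$-error classifier on $R_{k-1}\cup D_k$, which exists because the global noiseless optimum $h^*$ is $0$-error on all of $D \supseteq R_{k-1}\cup D_k$. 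So the two things to establish are that $R_{k-1}$ is genuinely a uniform sample of $D'$, and that the resulting classifier has $\eps$-error.

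First I would nail down the sampling invariant by induction on $i$: $R_i$ is a uniform sample (without replacement) of size $\min(s_\eps, m_i)$ from $\bigcup_{j=1}^i D_j$. The base case $R_1$ holds by definition. For the inductive step, $P_{i+1}$ receives $(R_i, m_i)$ and continues the reservoir process over its own stream $D_{i+1}$, admitting each new point into the size-$s_\eps$ reservoir with the standard acceptance probability governed by the running count; this count is available precisely because $m_i$ is transmitted exactly. The standard analysis of reservoir sampling \citep{vitter85reservoir} then guarantees that $R_{i+1}$ is uniform over $\bigcup_{j=1}^{i+1} D_j$. Crucially this guarantee is exact and purely distributional, so no failure probability accumulates along the chain, and the transmitted totals are the \emph{only} reason the raw counts are needed.

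With the invariant in hand, $R_{k-1}$ is a uniform sample of size $s_\eps$ from $D'$, so by the same $\eps$-net argument used in Theorem~\ref{thm:generic1way} (via \citep{book09anthony}), $R_{k-1}$ is an $\eps$-net for $(D',\c{A})$ with constant probability. The classifier returned by $P_k$ has $0$ error on $R_{k-1}$ and $0$ error on $D_k$; the $\eps$-net property bounds its error on $D'$ by $\eps|D'|$, and adding the zero error on $D_k$ gives total error at most $\eps|D'|\le\eps|D|$, i.e. $\eps$-error (in the degenerate case $m_{k-1}<s_\eps$ we have $R_{k-1}=D'$ and the classifier is exactly $0$-error). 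For communication, each of the $k-1$ forwarded messages consists of $O((\nu/\eps)\log(\nu/\eps))$ sample points plus one integer count, for a total of $O(k(\nu/\eps)\log(\nu/\eps))$.

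The only genuinely delicate step — and the one I would write most carefully — is the merge inside the induction: verifying that \emph{continuing} reservoir sampling from a summarized reservoir $R_i$ (rather than replaying the raw stream) still produces a uniform sample of the union, which is exactly what the exact counts $m_i$ buy us. Everything else is a direct reuse of Theorem~\ref{thm:generic1way}. A minor but worthwhile point to state explicitly is that, because the sampling guarantee is exact, the constant success probability stems from a \emph{single} application of the $\eps$-net theorem to $R_{k-1}$; hence there is no union bound over the $k$ players and the success probability does not degrade with $k$.
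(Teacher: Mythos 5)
Your proposal follows the same route as the paper: maintain a reservoir sample $R_i$ of $\bigcup_{j=1}^i D_j$ along the chain, let $P_k$ train a $0$-error classifier on $R_{k-1}\cup D_k$, and invoke the $\eps$-net guarantee on $R_{k-1}$ to bound the error on $\bigcup_{j=1}^{k-1}D_j$ by $\eps|D|$. You simply spell out the details the paper leaves implicit (the induction for the reservoir invariant, the degenerate small-sample case, and the observation that only one application of the $\eps$-net theorem is needed so the failure probability does not grow with $k$), so this is correct and essentially identical to the paper's argument.
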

\begin{proof}
The final set $R_{k-1}$ is an $\eps$-net, so any $0$-error classifier on $R_{k-1}$, is an $\eps$-error classifier on $\bigcup_{j=1}^{k-1} D_i$.  So since the total number of points misclassified is at most $\sum_{j=1}^{k-1} \eps |D_j| \leq \eps |D|$, this achieves the proper error bound.  The communication cost follows by definition of the protocol.  
\end{proof}

\paragraph{$0$-Error protocols for $k$-players.}
Any $0$-error one-way protocol extends directly from $2$-player to $k$-players.  This requires that each player can send exactly the subset of the family of classifiers that permit $0$ error to the next player in the sequence.  This chain of players only refines this subset, so by our noiseless assumption that there exists some $0$-error classifier, the final player can produce a classifier that has $0$-error on all data.  
\begin{theorem}
In the noiseless setting, any one-way two-player $0$-error protocol of communication complexity $C$ extended to a one-way $k$-player $0$-error protocol with $O(Ck)$ communication complexity.   
\label{thm:k-0err}
\end{theorem}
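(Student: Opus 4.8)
The plan is to prove this by induction on the number of players, chaining the two-party protocol along the ordering $P_1 < P_2 < \cdots < P_k$ and maintaining a single invariant: after player $P_i$ transmits, the message it sends is a summary $M_i$ of size at most $C$ whose set of consistent $0$-error classifiers equals that of all data seen so far, i.e. $\c{H}_0(M_i) = \c{H}_0(\bigcup_{j=1}^i D_j)$, where $\c{H}_0(P)$ denotes the classifiers in $\c{H}$ with $0$ error on $P$. The engine driving the argument is the elementary composability of the noiseless constraint: a classifier has $0$ error on $X \cup Y$ if and only if it has $0$ error on each part, so $\c{H}_0(X \cup Y) = \c{H}_0(X) \cap \c{H}_0(Y)$.

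First I would isolate the property of the two-party $0$-error protocol that makes it reusable in a chain. Per our model, when $A$ holds a dataset $P$ its message $m(P)$ of size at most $C$ is a representation of the $0$-error constraints on $P$, meaning $\c{H}_0(m(P)) = \c{H}_0(P)$; this is precisely the requirement that ``each player can send exactly the subset of classifiers that permit $0$ error.'' For the concrete classes this is transparent — the threshold summary $\{p^+,p^-\}$, the interval summary, and the rectangle summary $R_A^+, R_A^-$ each encode $\c{H}_0(P)$ exactly — and in each case the summary is itself an object of the same type, so it can legitimately be fed back into the summarization routine. It is also at least heuristically forced by adversarial correctness: since $A$ fixes $m(P)$ without seeing $D_B$ yet $B$ must recover a member of $\c{H}_0(P) \cap \c{H}_0(D_B)$ for every admissible $D_B$, the message cannot discard which classifiers are $0$-error on $P$.

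The induction then runs as follows. For the base case $P_1$ computes $M_1 := m(D_1)$, giving $\c{H}_0(M_1) = \c{H}_0(D_1)$ and $|M_1| \le C$. For the inductive step, $P_{i+1}$ receives $M_i$, forms the combined dataset $M_i \cup D_{i+1}$, treats it as ``$A$'s data,'' and emits $M_{i+1} := m(M_i \cup D_{i+1})$. Using the message property and then composability,
\[
\c{H}_0(M_{i+1}) = \c{H}_0(M_i \cup D_{i+1}) = \c{H}_0(M_i) \cap \c{H}_0(D_{i+1}) = \c{H}_0\!\Big(\bigcup_{j=1}^{i} D_j\Big) \cap \c{H}_0(D_{i+1}) = \c{H}_0\!\Big(\bigcup_{j=1}^{i+1} D_j\Big),
\]
and $|M_{i+1}| \le C$ because the two-party protocol always sends at most $C$, irrespective of its input. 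After $k-1$ transmissions the last player $P_k$ holds $M_{k-1} \cup D_k$ with $\c{H}_0(M_{k-1} \cup D_k) = \c{H}_0(D)$; by the noiseless assumption this set is nonempty, so $P_k$ returns any classifier in it, which has $0$ error on all of $D$. The total communication is $k-1$ messages of size at most $C$, hence $O(Ck)$.

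The one step needing genuine care is the claim that the two-party message represents $\c{H}_0(P)$ losslessly and that re-summarizing a summary is both legitimate and stays within size $C$ — in other words, that the summarization is idempotent on its own outputs and that $M_i$ lives in the same representation the protocol consumes. For the specific geometric classes this is immediate, but in full generality it is the crux: one must either take ``summarization of the $0$-error subset'' as the structural hypothesis on the protocol (which is how all of our $0$-error protocols are presented) or sharpen the adversarial-$D_B$ indistinguishability argument to show no correct one-way $0$-error protocol can compress below the full constraint set. I expect this to be the main obstacle, and I would resolve it by adopting the summarization view, noting that it is exactly the property the chaining invariant consumes at each step.
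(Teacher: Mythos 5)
Your proposal is correct and follows essentially the same route as the paper, which justifies the theorem by requiring that each player forward "exactly the subset of the family of classifiers that permit $0$ error" so that the chain only refines this subset until the final player selects any surviving classifier; your invariant $\c{H}_0(M_i)=\c{H}_0(\bigcup_{j\le i}D_j)$ and the intersection identity are just a careful formalization of that paragraph. The crux you flag --- that the summary losslessly encodes the version space, can be re-summarized, and stays within size $C$ --- is precisely the structural hypothesis the paper adopts implicitly, so you have simply made explicit what the paper leaves informal.
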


This implies that $k$-players can execute a one-way $0$-error protocol for axis-aligned rectangles with $O(dk)$ communication.  Classifiers from the families of thresholds and intervals follow as special case.

%We first consider the case of axis-aligned rectangles and show that in $\b{R}^d$, $k$-parties can find a $0$-error classifier using $O(kd)$ communication.  This includes intervals and thresholds as special cases.  One node is designated as the \emph{coordinator} and all other nodes send their minimal positive and minimal negative rectangles to the coordinator.  If the positive points are inside a rectangle that excludes the negative points, then the coordinator returns the minimal positive rectangle that contains all of the sent minimal positive rectangles and all of its positive points.  It proceeds symmetrically if all negative points are in a rectangle that excludes all positive points.  
%
%\begin{theorem}
%In the noiseless setting, $k$-parties can find a $0$-error classifier over axis-aligned rectangles in $\b{R}^d$ in $O(dk)$ communication.  
%\label{k-rect}
%\end{theorem}
%\begin{proof}
%The described protocol takes $4d$ to send information from $k-1$ nodes to the coordinator, achieving the communication bound.  
%We claim the $0$-error result since there must exist a $0$-error classifier in the noiseless setting, and it must contain all positive points.  Since each rectangle sent to the coordinator is minimal from each node, any rectangle smaller than the minimal on all data sent (and data at the coordinator node) would misclassify some positive point.  Since any $0$-error classifier cannot be any smaller along any axis, then this rectangle must be contained by the $0$-error classifier, and thus must itself have $0$ error.  
%\end{proof}

\subsection{Two-way Protocols}

When not restricted to one-way protocols, we assume all players take turns talking to each other in some preconceived or centrally organized fashion.  This fits within standard techniques of organizing communication among many nodes that prevents transmission interference.  

\paragraph{Linear separators in $\b{R}^2$ with $k$ players.}
Next we consider linear separators in $\b{R}^2$.  We proceed in a series of epochs.  In each epoch, each player takes one turn as coordinator.  On its turn as coordinator, player $P_i$ plays one round of the $2$-player protocol with each other player.  That is, it sends out its proposed support points, and each other player responds with either early termination or an alternative set of support points, including at least one that ``violates'' the family of linear separators proposed by the coordinator.  The protocol terminates if all non-coordinators agree to terminate early and their proposed family of linear separators all intersect.  Note that even if all other players may want to terminate early, they might not agree on a single linear separator along the proposed direction; but by replying with a modified set of support points, they will designate a range, and the manner in which these ranges fail to intersect will indicate to the coordinator a ``direction'' to turn.

\begin{theorem}
In the noiseless setting, $k$-parties can find an $\eps$-error classifier over halfspaces in $\b{R}^2$ in $O(k^2 \log (1/\eps))$ communication.                                                                                           
\label{thm:k-lin-sep}
\end{theorem}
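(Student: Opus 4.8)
The plan is to lift the two-party two-way protocol of Section~\ref{sec:linsep} to $k$ players via the coordinator/epoch structure described just above, and to argue that one full epoch (one coordinator turn per player) accomplishes for all $k$ players what a single round of the two-party protocol of Theorem~\ref{thm:rounds} accomplishes for one. I would separate the argument into two independent accountings --- a communication count per epoch and a progress (halving) guarantee per epoch --- and then combine them.

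First, the communication count. Fix an epoch. When $P_i$ is coordinator it runs a single round of the two-party protocol against each of the other $k-1$ players: it broadcasts its proposed direction $v$ together with the constant-size support set $S$ and the bracketing directions $v_l, v_r$, and each non-coordinator replies with either an early-termination certificate or a constant-size set of violating support points and its own directions, exactly as in step~(2) of the basic protocol. Each such pairwise exchange is $O(1)$, so a coordinator turn costs $O(k)$ and a full epoch costs $O(k^2)$. It therefore suffices to show that $O(\log(1/\eps))$ epochs suffice, giving a total of $O(k^2 \log(1/\eps))$.

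Second, and this is the heart of the argument, I would prove that on $P_i$'s coordinator turn, either the protocol terminates globally or $|U_i|$ drops by at least half. The coordinator picks $v$ as the weighted-median direction of its own uncertainty $U_i$, just as in the two-party case, so that the points of $U_i$ are split in half by the median edge. I then treat the aggregate of all non-coordinators as a single ``super-player $B$'': each non-coordinator either certifies it can $\eps$-classify its own data with a separator of direction $v$ within the proposed margin, or, by Lemma~\ref{app-lem:dir-choice}, forces the normal to rotate clockwise or counter-clockwise. The crucial observation is that the noiseless assumption makes these responses consistent: a global $0$-error classifier $h^*$ exists, its normal direction $v^*$ lies in the $0$-error (hence $\eps$-acceptable) range of \emph{every} player, and so every non-coordinator that cannot accept $v$ must force rotation toward the side of $v$ containing $v^*$ --- the same side for all of them. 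Hence the combined feedback collapses to an unambiguous single bit, and invoking the halving argument of Lemma~\ref{app-lem:split} (with this super-player in the role of $B$) shows the far half of $U_i$ is absorbed into the \sota and $|U_i|$ halves. If instead no non-coordinator forces a rotation, then every player's acceptable range contains $v^*$, these ranges intersect, and the coordinator outputs a common separator $\eps$-good for every $P_j$, terminating the protocol.

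Assembling the pieces: since each player is coordinator exactly once per epoch, after one epoch every $|U_i|$ has halved, so after $O(\log(1/\eps))$ epochs we reach $|U_i| \le \eps|D_i|$ for all $i$ simultaneously; on the next coordinator turn every player early-terminates with a range containing $v^*$, the ranges intersect, and a single global direction is returned. Unioning the per-player error budgets gives total error $\sum_i \eps|D_i| = \eps|D|$, within the claimed $O(k^2\log(1/\eps))$ communication. I expect the main obstacle to be precisely the consistency claim of the third paragraph: rigorously ruling out the case where different non-coordinators push the direction in opposite ways (which would stall progress), and showing that the way the ranges fail to intersect always designates one side. The argument resolves this through the existence of $v^*$, but making it airtight requires careful bookkeeping of each player's direction interval (each always containing $v^*$), reconciling the margin and early-termination conditions across $k-1$ heterogeneous respondents into the single clean bit that Lemma~\ref{app-lem:dir-choice} expects, and confirming that the median-based choice of $v$ still halves $U_i$ once the aggregate rotation direction is fixed.
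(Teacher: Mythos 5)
Your overall architecture matches the paper's proof exactly: one epoch costs $O(k^2)$, each coordinator turn either halves that coordinator's $U_i$ via the median-based choice of support points (Lemma~\ref{app-lem:split}) or contributes to global termination, and $O(\log(1/\eps))$ epochs suffice. Your observation that the normal $v^*$ of the global $0$-error classifier lies in every player's feasible direction interval, so that all rotation-forcing responses push toward the \emph{same} side of $v$, is a nice explicit justification of a consistency fact the paper leaves implicit, and it correctly disposes of the case you worried about where different non-coordinators might push in opposite directions.

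However, there is a genuine gap in your terminal case, and it is precisely the case the paper singles out as ``the difficult part.'' You assert that if no non-coordinator forces a rotation then ``every player's acceptable range contains $v^*$, these ranges intersect,'' and the protocol terminates. This does not follow. When a non-coordinator early-terminates it certifies only that \emph{some} offset of a separator with normal $v$, within the proposed margin, is $\eps$-good for \emph{its own} data; two players can both accept direction $v$ while their acceptable offset intervals along $v$ are disjoint (e.g., a negative point of $P_2$ lies ``above'' a positive point of $P_3$ inside the margin). The existence of $h^*$ with normal $v^*$ gives no common offset when $v \neq v^*$. The paper resolves this by noting that such a cross-player negative/positive pair plays exactly the role of the violating pair in Lemma~\ref{app-lem:dir-choice} (the right-most configuration of Figure~\ref{fig:early-term}): the non-intersection of the offset ranges itself forces a clockwise or counter-clockwise rotation, so the coordinator still prunes half of its region of uncertainty as if a single non-coordinator had refused to terminate. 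You flag ``showing that the way the ranges fail to intersect always designates one side'' as an obstacle in your closing paragraph, but your main argument wrongly claims the case cannot arise rather than handling it; without this step the termination condition (``all accept \emph{and} their offset ranges intersect'') is not established and the epoch count does not go through.
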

\begin{proof}
Each epoch requires $O(k^2)$ communication; each of $k$ players uses a turn to communicate a constant number of bits with each of $k$ other players.  
We now just need to argue that the algorithm must terminate in at most $O(\log (1/\eps))$ epochs.  

We do so by showing that each player decreases its region of uncertainty by at least half for each turn it spends at coordinator, or it succeeds in finding a global separating half space and terminates.  If any non-coordinator does not terminate early, it rules out at least half of the coordinator's points in the region of uncertainty since by Lemma \ref{app-lem:split}, the coordinator's broadcasted support points represent the median of its uncertain points.  If all non-coordinators agree on the proposed direction, and return a range of offsets that intersect, then the coordinator terminates the algorithm and can declare victory, since the sum of all error must be at most $\sum_i \eps |D_i| \leq \eps |D|$ in that range.  

The difficult part is when all non-coordinators individually want to terminate early, but the range of acceptable offsets along the proposed normal direction of the linear separator do not globally intersect.  This corresponds to the right-most picture in Figure \ref{fig:early-term} where the direction is forced clockwise or counter-clockwise because a negative point from one non-coordinator is ``above'' the positive point from a separate non-coordinator.  The combination of these points thus allow the coordinator to prune half of its region of uncertainty just as if a single non-coordinator did not terminate early.  
\end{proof}

\section{Experiments}
\label{sec:expts}
%\vspace{-0.25cm}

In this section, we present results to empirically demonstrate the correctness and convergence of \itsupp.

%--------------------------------------------------------------------------------------------------------
%\vspace{-0.15cm}
\paragraph{Two-Party Results.}
%\vspace{-0.15cm}
For the two-party results, we empirically compare the following methods: 
%(a) \snode - a single node classifier learned on $D = D_A \cup D_B$, 
(a) \naiv - a naive approach that sends all points in $A$ to $B$ and then learns at $B$, %(similar to \snode), 
(b) \vote - a simple voting strategy that uses the majority voting rule to combine the predictions of $h_A$ and $h_B$ on $D = D_A \cup D_B$; ties are broken by choosing the label whose prediction has higher confidence,
(c) \rand - $A$ sends a random sample (an $\eps$-net $S_A$ of size $(d/\eps) \log(d/\eps)$) of $D_A$ to $B$ and $B$ learns on $D_B \cup S_A$, 
(d) \supo - %at each round, both $A$ and $B$ exchange support vectors of their current classifiers and the process repeats (
	\itsupp that selects informative points heuristically (ref. Section~\ref{sec:linsep}), 
	%%until the error in within the prescribed limit, 
and 
(e) \oalgo - \itsupp that selects informative points with convergence guarantees (ref. Section~\ref{sec:linsep}). 
SVM %(based on LibSVM~\citep{libsvm}) 
was used as the underlying classifier for all aforementioned approaches. In all cases, the errors are
reported on the dataset $D$ % = D_A \cup D_B$.
with an $\eps$ value of $0.05$ (where applicable).

The above methods have been evaluated on three synthetically generated 
datasets (\done, \dtwo, \dthr). For all datasets, both $A$ and $B$ contain $500$ data points each ($250$
positive and $250$ negative).
%\done and \dthr consists of $100$ examples ($50$
%positive and $50$ negative) whereas \dtwo consists of $70$ examples ($35$ positive and $35$ negative).
Figure~\ref{fig:two-party-data} pictorially depicts the data. 

\begin{figure*}[!htbp]
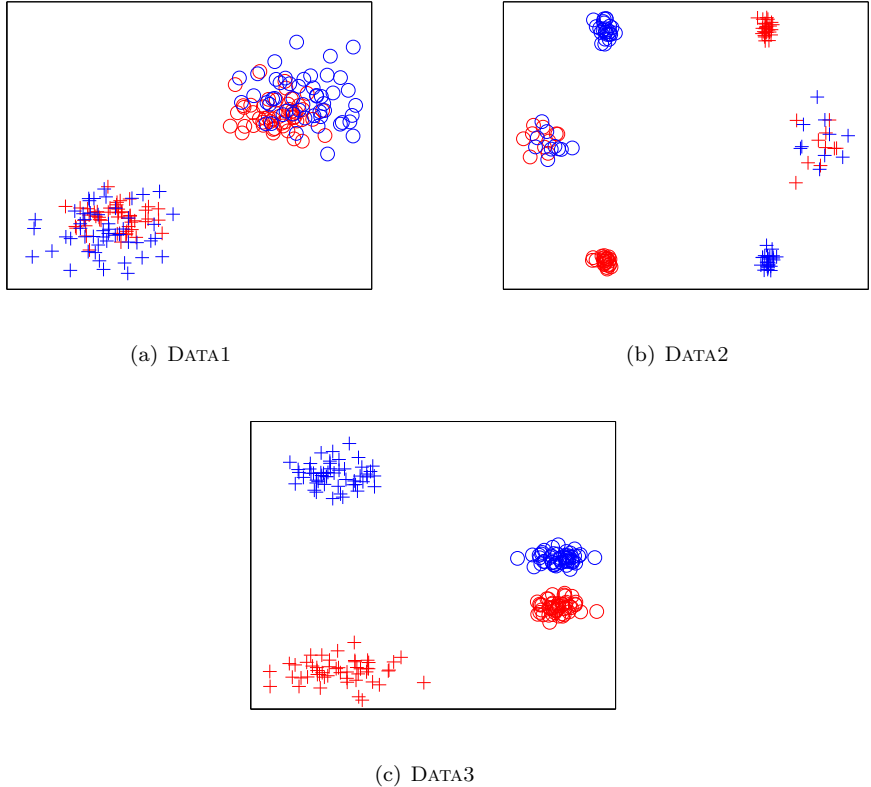

%\vspace{-0.45cm}
\centering
\subfigure[\done]{
\includegraphics[width=6.25cm]{plots/dataset1.pdf}
\label{fig:data1-2party}
%\vspace{-.2in}
}
\subfigure[\dtwo]{
\includegraphics[width=6.25cm]{plots/dataset2.pdf}
\label{fig:data2-2party}
%\vspace{-.2in}
}
\subfigure[\dthr]{
\includegraphics[width=6.25cm]{plots/dataset3.pdf}
\label{fig:data3-2party}
%\vspace{-.2in}
}
%\vspace{-.25in}
\caption{Red represents $A$ and blue represents $B$. Positive and
	negative examples (for all datasets) are denoted by `$+$'s and `$\circ$'s, respectively.}
\label{fig:two-party-data}
%\vspace{-0.25cm}
\end{figure*}

\begin{table}[!htbp]
	{\small
	\centering
	%\begin{tabular}{|p{1.5cm}||p{0.65cm}|p{0.5cm}|p{0.75cm}|p{0.5cm}|p{0.85cm}|p{0.5cm}|} 
	\begin{tabular}{|c||c|c|c|c|c|c|} 
		\hline
		{\bf Method} 	& \multicolumn{2}{|c|}{\bf \done} & \multicolumn{2}{|c|}{\bf \dtwo} & \multicolumn{2}{|c|}{\bf \dthr}\\
		\cline{2-7}
                  & {\bf Acc} 	& {\bf Cost}   & {\bf Acc} 	& {\bf Cost}  	& {\bf Acc} & {\bf Cost} \\
		\hline
%		\snode				& 100\%       & -						 & 100   \%   	& - 					&   100\%   	& -   					\\
		%\naiv         & 100\%       & 100          & 100   \%   	& 70  				&   100\%   	& 100 					\\
		%\vote         & 100\%       & 100          &  82.14\%   	& 70  				&    50\%   	& 100 					\\
		%\rand         & 100\%       & 60           &  99.29\%   	& 60  				& 99.57\%   	& 60  					\\
		%{\bf \supo}   & {\bf 100\%} & {\bf 4}      & {\bf 100\%} & {\bf 4}  		& {\bf 100\%} & {\bf 20}  		\\
		%{\bf \oalgo}  & {\bf 100\%} & {\bf 11}     & {\bf 100\%} & {\bf 13}			& {\bf 100\%} & {\bf 14}  		\\
		%%\naiv         & 100\%       & 500          &  100 \%   	& 500 				&   100\%   	& 500 					\\
		%%\vote         & 100\%       & 500          &  100 \%   	& 500  				&    80\%   	& 100 					\\
		%%\rand         & 100\%       & 24           &  100 \%   	& 24  				& 99.7\%   	& 24  					\\
		%%{\bf \supo}   & {\bf 100\%} & {\bf 4}      & {\bf 100\%} & {\bf 4}  		& {\bf 100\%} & {\bf 28}  		\\
		%%{\bf \oalgo}  & {\bf 100\%} & {\bf 6}     & {\bf 100\%} & {\bf 6}			& {\bf 100\%} & {\bf 11}  		\\
		\naiv         & 100\%       & 500          &  100\%   	& 500 				&   100\%   	& 500 					\\
		\vote         & 100\%       & 500          &  100\%   	& 500  				&    50\%   	& 500 					\\
		\rand         & 100\%       & 65           &  100\%   	& 65  				&   99.62\%   	& 65  					\\
		{\bf \supo}   & {\bf 100\%} & {\bf 4}      & {\bf 100\%} & {\bf 4}  		& {\bf 100\%} & {\bf 12}  		\\
		{\bf \oalgo}  & {\bf 100\%} & {\bf 6}     & {\bf 100\%} & {\bf 6}			& {\bf 100\%} & {\bf 10}  		\\
		\hline
	\end{tabular}
	%\vspace{-0.15cm}
	\caption{Accuracy (Acc) and communication cost (Cost) of different methods for \emph{two-dimensional} noiseless
		datasets.} %\done, \dtwo and \dthr. \rand, \supo and \oalgo use an $\eps=0.05$.}
	\label{tab:two-party-results-2d}
	}
%\vspace{-0.10cm}
\end{table}

Table~\ref{tab:two-party-results-2d} compares the accuracies and communication costs of the aforementioned
methods for the dataset in $2$-dimensions. For all datasets, \supo and \oalgo required the least amount of communication to learn an optimal classifier. For cases when it is easy to separate the positive from the negative samples (e.g. \done and \dtwo) \supo converges faster than \oalgo. However, 
%the theoretical analysis of \oalgo holds for any input dataset and 
\dthr show that there exists difficult datasets where \oalgo requires less communication than \supo. This reinforces our theoretical convergence claims for \oalgo that hold for \emph{any} input dataset. 
\dthr in Table~\ref{tab:two-party-results-2d} shows that there exists cases when both \vote and \rand perform worse than \oalgo and with a much higher communication overhead; for \dthr, \vote performs as bad as random guessing. 
%In fact, this difficult case can be generalized to $k$-players where $k/2$ players look like $A$ and $k/2$ look like $B$.  
Finally, neither \vote nor \supo provide any provable error guarantees.

%\subsection{high-d results}

\begin{table}[!htbp]
	{\small
	\centering
	%\begin{tabular}{|p{1.5cm}||p{0.65cm}|p{0.5cm}|p{0.75cm}|p{0.5cm}|p{0.85cm}|p{0.5cm}|} 
	\begin{tabular}{|c||c|c|c|c|c|c|} 
		\hline
		{\bf Method} 	& \multicolumn{2}{|c|}{\bf \done} & \multicolumn{2}{|c|}{\bf \dtwo} & \multicolumn{2}{|c|}{\bf \dthr}\\
		\cline{2-7}
                  & {\bf Acc} 	& {\bf Cost}   & {\bf Acc} 	& {\bf Cost}  	& {\bf Acc} & {\bf Cost} \\
		\hline
%		\snode				& 100\%       & -						 & 100   \%   	& - 					&   100\%   	& -   					\\
		\naiv         & 100\%       & 500          & 100\%   	  & 500  				&   100\%   	& 500 					\\
		\vote         & 100\%       & 500          & 100\%   	  & 500 				&   81.8\%   	& 500 					\\
		\rand         & 100\%       & 100           & 100\%   	& 100  				&   99.1\%   	& 100  					\\
		{\bf \supo}   & {\bf 100\%} & {\bf 4}      & {\bf 100\%} & {\bf 4}  		& {\bf 98.27\%} & {\bf 40}  		\\
%		{\bf \oalgo}  & {\bf 100\%} & {\bf 11}     & {\bf 100\%} & {\bf 13}			& {\bf 100\%} & {\bf 14}  		\\
		\hline
	\end{tabular}
	%\vspace{-0.15cm}
	\caption{Accuracy (Acc) and communication cost (Cost) of different methods for \emph{high-dimensional} noiseless
		datasets.}% \done, \dtwo and \dthr.} \rand and \supo use an $\eps=0.05$.}
	\label{tab:two-party-results-highd}
	}
%\vspace{-0.10cm}
\end{table}

Table~\ref{tab:two-party-results-highd} presents results for \done, \dtwo, \dthr extended to dimension $=10$. As can be seen, our
proposed heuristic \supo outperforms all other baselines in terms communication cost while having comparable
accuracies.

%\jeff{What dimension is \emph{high-dimensional}? and why does the size of random stay fixed?  The sample size should depend on $d$.  BTW, these numbers would look much more impressive with 1000 points instead of 100.}

%\vspace{-0.15cm}
%\subsection{$k$-Party Results}
%\label{ssec:kparty-res}
\paragraph{$k$-Party Results.}
%\vspace{-0.15cm}
The aforementioned methods have been appropriately modified for the multiparty scenario. For \naiv,
\vote and \rand, a node is fixed as the \emph{coordinator} and the remaining $(k-1)$ nodes send their
information to the coordinator node which aggregates all the received information. For \supo and \oalgo, in
each epoch, one of the $k$-players takes a turn to act as the \emph{coordinator} and updates its state by
receiving information from each of the remaining $(k-1)$ nodes. We experiment with a $k$ value of $4$ (i.e.,
four nodes $A, B, C, D$). As earlier, for all datasets each of $A$,$B$,$C$,$D$, contain $500$ examples ($250$
positive and $250$ negative). % whereas \dtwo consists of $70$ examples ($35$ positive and $35$ negative). 
The datasets are shown in Figure~\ref{fig:k-party-data}.

\begin{figure*}[!htbp]
%\vspace{-0.45cm}
\centering
\subfigure[\done]{
\includegraphics[width=6.25cm]{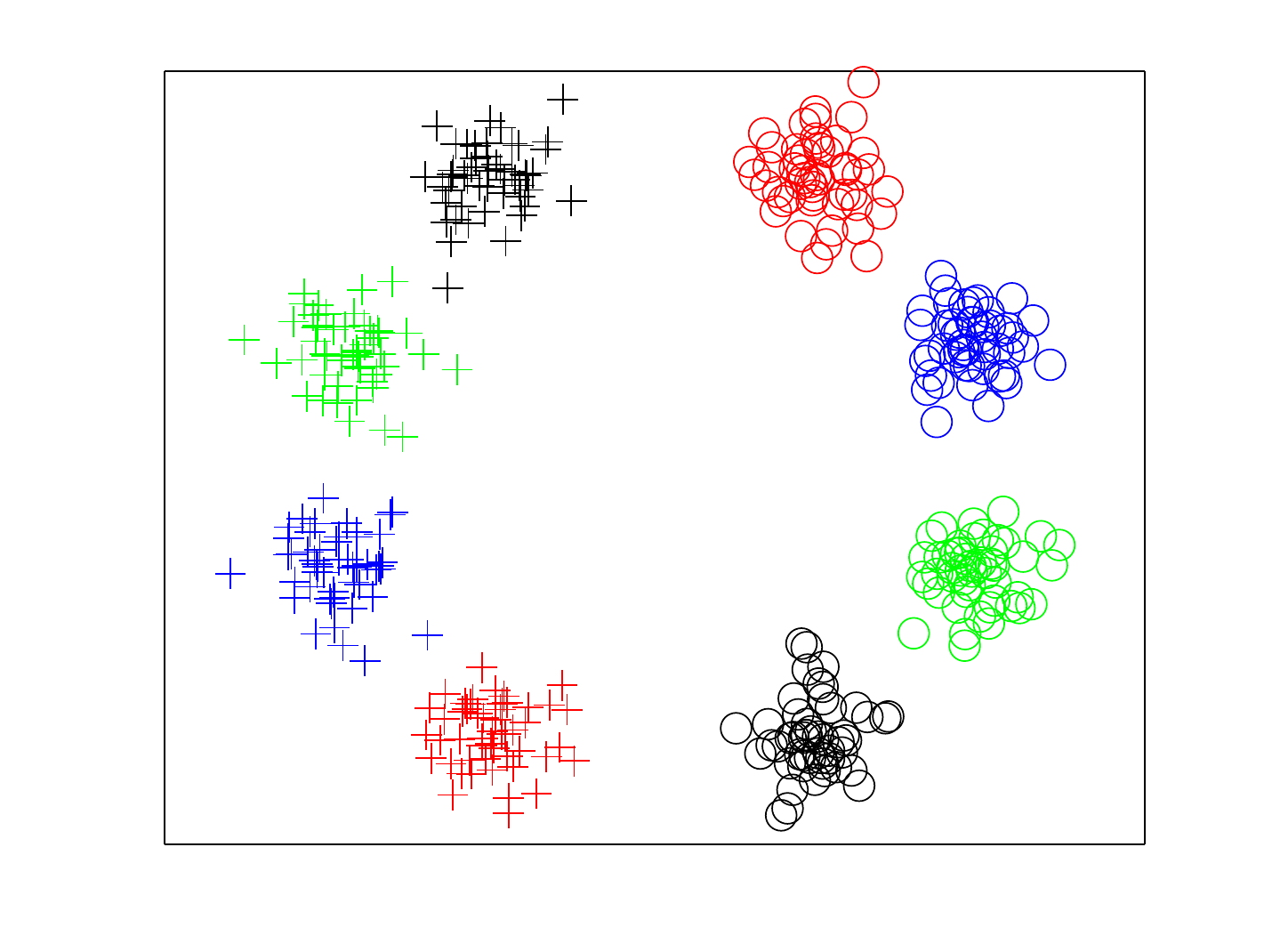}
\label{fig:data1-kparty}
%\vspace{-.2in}
}
\subfigure[\dtwo]{
\includegraphics[width=6.25cm]{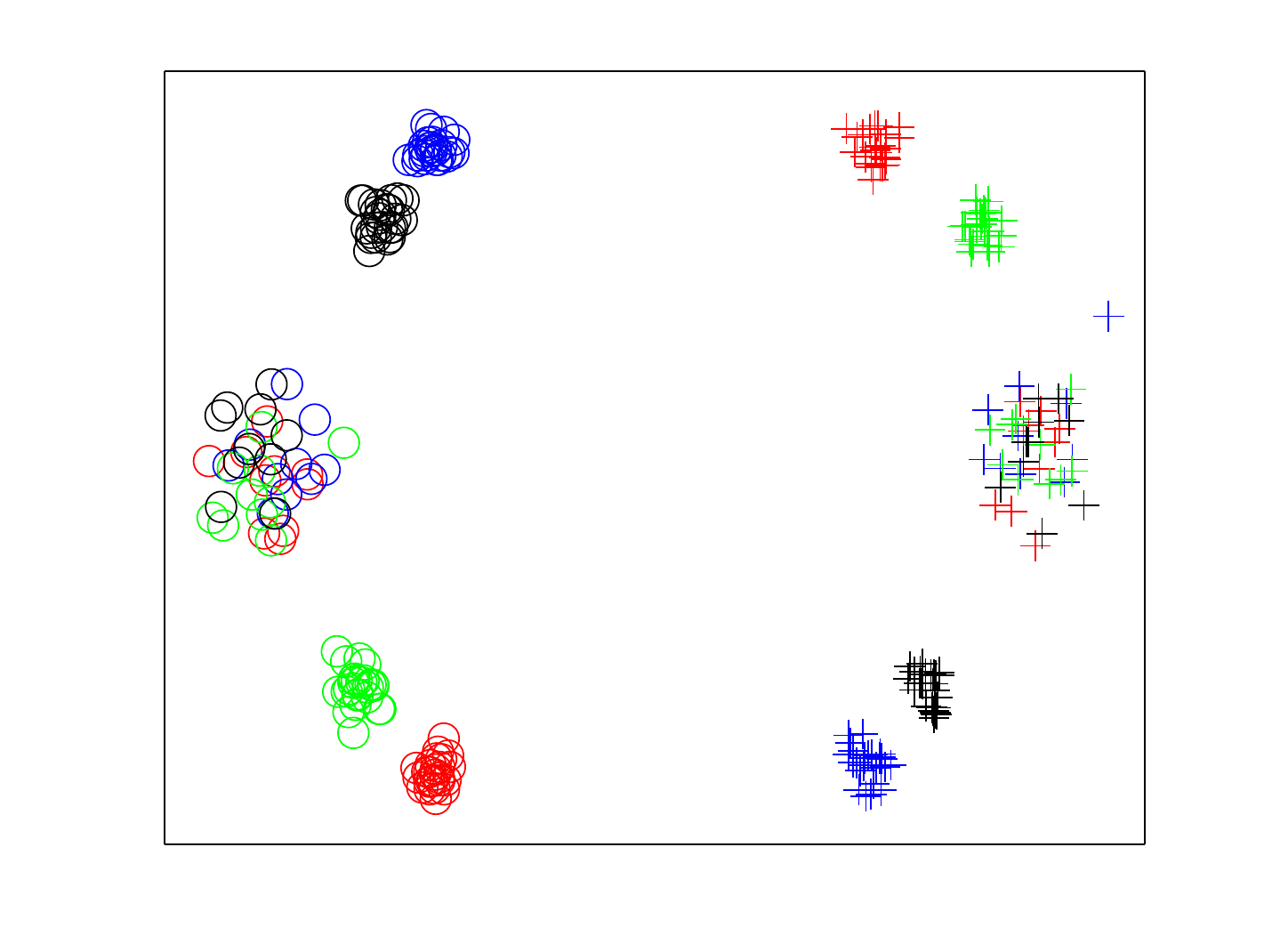}
\label{fig:data2-kparty}
%\vspace{-.2in}
}
\subfigure[\dthr]{
\includegraphics[width=6.25cm]{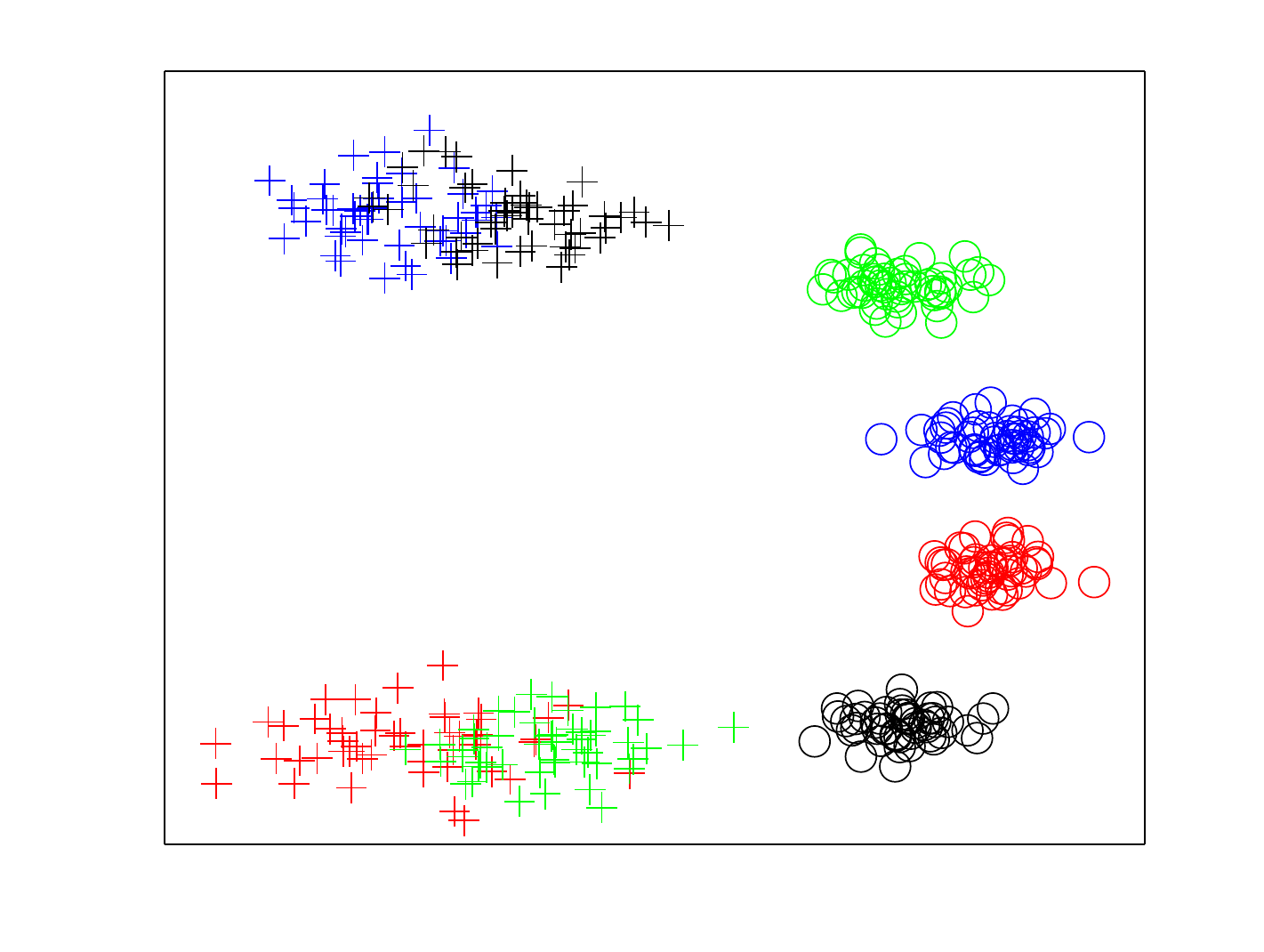}
\label{fig:data3-kparty}
%\vspace{-.2in}
}
%\vspace{-.1in}
\caption{Red represents $A$, blue represents $B$, green represents $C$ and black represents $D$. Positive and
	negative examples (for all datasets) are denoted by `$+$'s and `$\circ$'s, respectively.}
\label{fig:k-party-data}
%\vspace{-0.25cm}
\end{figure*}

\begin{table}[!htbp]
	{\small
	\centering
	%\begin{tabular}{|p{1.5cm}||p{0.85cm}|p{0.5cm}|p{0.65cm}|p{0.5cm}|p{0.85cm}|p{0.5cm}|} 
	\begin{tabular}{|c||c|c|c|c|c|c|} 
		\hline
		{\bf Method} 	& \multicolumn{2}{|c|}{\bf \done} & \multicolumn{2}{|c|}{\bf \dtwo} & \multicolumn{2}{|c|}{\bf \dthr}\\
		\cline{2-7}
                  & {\bf Acc} 	& {\bf Cost}   & {\bf Acc} 	& {\bf Cost}  	& {\bf Acc} & {\bf Cost} \\
		\hline
%		\snode				& 100\%       & -						 & 100   \%   	& - 					&   100\%   	  & -   					\\
		\naiv         & 100\%       & 1500         & 100\%      	& 1500				&   100\%   	  & 1500 					\\
		\vote         & 98.75\%     & 1500         & 100\%      	& 1500				&    50\%   	  & 1500 					\\
		\rand         & 100\%       & 195          & 100\%      	& 195 				&   99.76\%  	  & 195 					\\
		%{\bf \supo}   & {\bf 100\%} & {\bf 10}    & {\bf 100\%} & {\bf 2}  		& {\bf 100\%}   & {\bf 32}  		\\
		{\bf \supo}   & {\bf 97.61\%} & {\bf 14}   & {\bf 100\%} & {\bf 2}  		& {\bf 97.38\%} & {\bf 38}  		\\
		{\bf \oalgo}  & {\bf 99.0\%} & {\bf 36}    & {\bf 100\%} & {\bf 6}			& {\bf 98.75\%}  & {\bf 29}  		\\
		\hline
	\end{tabular}
	%\vspace{-0.15cm}
	\caption{Accuracy (Acc) and communication cost (Cost) of different methods for \emph{two-dimensional} noiseless
		datasets.} % \done, \dtwo and \dthr.}% \rand and \supo use an $\eps=0.05$.}
	\label{tab:k-party-results}
	}
%\vspace{-0.10cm}
\end{table}

As shown in Table~\ref{tab:k-party-results}, for the $k$-party case, \itsupp substantially outperforms the baselines on all
datasets. As earlier, for the difficult dataset \dthr, \oalgo incurs less communication cost as compared to
\supo. We observed that for \done and \dtwo, both \supo and \oalgo require the same number of iterations to converge.
However, the cost for \oalgo is higher due to its quadratic dependency on $k$. One of our future goals is to
get rid of an extra $k$ factor and reduce the dependency from quadratic to linear in $k$ (ref.
Section~\ref{ssec:exten}).

%and \dthr, \oalgo incurs less communication cost as compared to \supo. We note that as we move from $2$ to $k$ number of
%nodes, the performance of heuristic \supo degrades (when compared to \oalgo). In particular, for \done, we observed an oscillatory
%behavior of \supo where initially it had zero error on $D_A$ and full error on $D_D$. Thereafter it gradually
%increased its error on $D_A$ and reduced its error to $D_D$ and then
%again went back to zero error on $D_A$ and full error on $D_D$. The process was repeated a few times until \supo
%converged. This is indeed expected as \supo is only a heuristic without any
%shrinking-by-half guarantees on the region of uncertainty (ref. Section~\ref{ssec:choice}).

%Moreover, for all datasets \itsupp performs substantially better than the baselines. 

%===========================================================================================================
%\vspace{-0.45cm}
\section{Discussion}
\label{sec:disc}
%\vspace{-0.25cm}
This paper introduces the problem of learning classifiers across distributed data where the communication
between datasets is the bottleneck to be optimized.  This model focus on real-world communication bottlenecks
is increasingly prevalent for massive distributed datasets.  
In addition, this paper identifies several very general solutions within this framework and introduces new techniques which provide provable exponential improvement by harnessing two-way communication.  

%\vspace{-0.15cm}
\subsection{Comparison with Related Approaches}
\label{ssec:rel-work}
%\vspace{-0.15cm}
As mentioned earlier, techniques like classifier \emph{voting}~\citep{bauer99voting-emp} and 
\emph{mixing}~\citep{mcdonald10diststrucperc,mann09distmem} are often used in a distributed setting to obtain
global classifiers. Interestingly, we have shown that if the different classifiers are only allowed to train
on mutually exclusive data subsets then there exists specific examples (under the adversarial model) where
voting will \emph{always} yield sub-optimal results. We have presented such examples in Section~\ref{sec:expts}. 
Additionally, parameter mixing (or averaging~\citep{collins02discHMM}), which has been primarily
proposed for maximum entropy (MaxEnt) models~\citep{mann09distmem} and structured
perceptrons~\citep{mcdonald10diststrucperc,collins02discHMM}, have shown to admit convergence results but
lack any bounds on the communication. Indeed, parameter-mixing for structured perceptrons uses an iterative
strategy that performs a large amount of communication. %We also note that our results apply for specific hypothesis classes (e.g., axis-aligned rectangles, linear separators) and are not tied to any fixed classifier. 
%Finally, the parameter mixing based approaches cater exclusively to parameterized learners whereas our results have no such limitations.\suresh{there was something hal was concerned about here re: nonparametric learners. Not sure what the issue was.}

% This paper proposes active learning for massive datasets under a distributed learning framework. We present the problem setup, introduce a wide variety of possible variants and, as a first step, solve a core subset of the aforementioned scenarios. 
% The technical highlight of our paper is a two-way communication protocol that provides
% an exponential improvement in communication cost, as compared to its one-way counterpart. 
% More importantly, we show that the two-way result can be reused to design communication protocols for the multi-party case \avi{Check!!}. 
% The key conceptual highlight of our work is the idea of using active learning to reduce other forms of model cost (in this case communication complexity) which, we believe, is of independent interest to the research community.  
% Thus, this paper lays the ground work for an important new family of problems to deal with learning on massive data.   The work presented here (both technical and conceptual) opens numerous interesting future research directions extending our proposed model and building on the core set of technical results presented in this paper.

The body of literature that lies closest to our proposed model relates to prior work on label compression
bounds~\citep{floyd95samplecompression, helmbold95onweak}. 
In the label compression model, both $A$ and $B$ have the same data but only $A$ knows the labels. The goal
is to efficiently communicate labels from $A$ to $B$.
Whereas in our model, each player ($A$ and $B$) have ``disjoint labeled'' datasets and the goal is to
efficiently communicate so as to learn a combined final $\eps$-optimal classifier on $D_A \cup D_B$.
Indeed some of our \emph{one-way} results derive bounds similar to the cited work, as they all build on the theory
of $\eps$-nets. In particular, there exists a label compression method~\citep{helmbold95onweak} based on
boosting, which gives $O(\log{1/\eps})$ size set for \emph{any concept} that can be represented as a majority vote
over a fixed number of concepts. However, in our model, we show that for certain concept classes (with
\emph{one-way} communication) we need a linear amount of communication (ref. Theorem~\ref{thm:lb-linsep}).
Furthermore, we demonstrate that using a \emph{two-way} communication model can provide an exponential
improvement (ref. Theorem~\ref{thm:rounds}) in communication cost.

%\vspace{-0.15cm}
\subsection{Future Extentions}
\label{ssec:exten}
%\vspace{-0.15cm}
Although we have provided many core techniques for designing protocols for minimizing communication in learning classifiers on distributed data, still many intriguing extensions remain.  Thus we conclude by outlining three important directions to extend this work and provide outlines of how one might proceed.  

%There are many future directions to pursue such as extending the linear separator case to higher dimensions, extending these results to the noisy setting, and improving the bounds for $k$-player problems. 

%\vspace{-0.25cm}
\paragraph{Higher dimensions.}
We provide several results for high-dimensions: for axis-aligned rectangles, for bounded VC-dimension families of classifiers, and a heuristic for linear separators.  But for the most common high-dimensional setting--SVMs computing linear separators on data lifted to a high-dimensional feature space--our results either have polynomial dependence on $1/\eps$ or have no guarantees.  For this setting, it would be ideal to extend out \oalgo routine which requires $O(\log 1/\eps)$ communication in $\b{R}^2$ to work in $\b{R}^d$.  The key insight required is extending our choice of a \emph{median point} to higher dimensions.  Unfortunately, the natural geometric generalization of a \emph{centerpoint} does not provide the desired properties, but we are hopeful that a clever analysis of a constant size \emph{net} or \emph{cutting} of the space of linear separators will provide the desired bounds.

%\vspace{-0.25cm}
\paragraph{Noisy setting.}
Most of the results presented in this paper generalize to noisy data. In fact, Theorem~\ref{thm:id-net} and %\& %and 
Theorem
~\ref{thm:generic1way} have straight-forward extensions to the noisy case by an $\eps$-sample argument~\citep{sariel11cg}.  This would increase the communication from $O(1/\eps)$ to about $O(1/\eps^2)$.  
It would of course be better to use communication only logarithmic in $1/\eps$.  
We suggest modifying \itsupp to work with noisy data with the following heuristic, and defer any formal analysis.  
In implementing \csup (based either on \supp or \oalgo) we suggest sending over support points of linear
separators that allow for classifiers with exactly $\eps$-error.  That is, players never propose classifiers
with $0$-error, even if one exists; or at least they provide margins on classifiers allowing $\eps$-error.  This would seem to describe the proper family of classifiers tolerating $\eps$-error of which we seek to find an example.  

%\vspace{-0.25cm}
\paragraph{Efficient two-way $k$-party protocols.}
All simple one-way protocols we present generalize naturally and efficiently to $k$-players; that is, with only a factor $k$ increase in communication.  In fact, a distributed random sample of size $t = O((\nu/\eps) \log (\nu/\eps))$ can be drawn with only $O(t + k)$ communication~\citep{HYLC11}, so under a different two-way \emph{coordinator model} some results for the one-way \emph{chain model} we study could immediately be improved.  
However, again it would be preferable to achieve protocols for linear separators with communication linear in $k$ and logarithmic in $1/\eps$; our protocols are quadratic in $k$.  In particular, our protocol seems slightly wasteful in that each player is essentially analyzing its improvements independently of  improvements obtained by the other players.  To improve the quadratic to linear dependence on $k$, we would need to coordinate this analysis (and potentially the protocol) to show that the joint space of linear separators must decrease by a constant factor for each player's turn as coordinator, at least in expectation.

%===========================================================================================================
% Acknowledgements should go at the end, before appendices and references
\section{Acknowledgement}
This work was sponsored in part by the NSF grants CCF-0953066 and CCF-0841185 and in part by the DARPA CSSG grant N11AP20022. This work was also partially supported by the sub-award CIF-A-32 to the University of Utah under NSF award 1019343 to CRA. All the authors gratefully acknowledge the support of the grants. Any opinions, findings, and conclusion or recommendation expressed in this material are those of the author(s) and do not necessarily reflect the view of the funding agencies or the U.S. government.

% Manual newpage inserted to improve layout of sample file - not
% needed in general before appendices/bibliography.
\newpage

%===========================================================================================================
%===========================================================================================================
\appendix

%===========================================================================================================
\section*{Appendix}

In all cases below we reduce to the \emph{indexing problem}~\cite{kushilevitz97cc}: Let $A$ have $n$ bits either $0$ or $1$, and $B$ has an index $i \in [n]$.  It requires $\Omega(n)$ one-way communication from $A$ to $B$ for $B$ to determine if $A$'s $i$th bit is $0$ or $1$, even allowing a $1/3$ probability of failure under randomized algorithms. 

%===========================================================================================================
\section{Lower Bounds for One-Way Linear Separators}
\label{app:lb-linsep}
\begin{theorem}
Using only one-way communication from $A$ to $B$, it requires $\Omega(1/\eps)$ communication to find an $\eps$-error linear classifier in $\b{R}^2$.  
\end{theorem}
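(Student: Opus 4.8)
The plan is to reduce from the \emph{indexing problem} described at the start of the appendix, instantiated with string length $n = \Theta(1/\eps)$. Given an indexing instance in which $A$ holds bits $x_1,\dots,x_n$ and $B$ holds an index $i \in [n]$, I would have $A$ translate her bits into a point set $D_A \subset \b{R}^2$ and $B$ translate his index into a small point set $D_B$, designed so that \emph{any} $\eps$-error linear classifier on $D = D_A \cup D_B$ determines the single bit $x_i$. A one-way classification protocol using $c$ bits of communication then yields a one-way indexing protocol using $c$ bits: $A$ sends the same message, $B$ builds $D_B$ from $i$, runs the classification protocol locally, and reads $x_i$ off the returned classifier. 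The indexing lower bound then forces $c = \Omega(n) = \Omega(1/\eps)$.

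The central design constraint is that the instance must be \emph{noiseless}: for every (bit-vector, index) pair a $0$-error separator must exist, yet the $\eps$-slack must still hide $\Omega(n)$ bits. Since a halfplane realizes only $O(n^2)$ distinct labelings of $n$ points, $A$ cannot encode arbitrary bit patterns through \emph{labels}; instead I would fix a single, always-realizable labeling and encode each $x_j$ through the \emph{position} of the $j$-th point. Concretely, I would place $A$'s points on a convex arc sharing a common reference family of separators, using $x_j$ to select between two nearby candidate locations (say two radial offsets $\delta$ apart) for the $j$-th point. $B$'s index $i$ is then encoded by a small cluster of test points planted at slot $i$ whose correct classified side differs according to which of the two candidate locations slot $i$ occupies, i.e. according to $x_i$.

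The calibration is exactly what makes $\eps$-error, rather than $0$-error, the operative regime. With $n = \Theta(1/\eps)$ points and $|D| = \Theta(1/\eps)$, the error budget is $\eps|D| = \Theta(1)$, so an $\eps$-error classifier may misclassify only a constant number of points beyond the optimum. I would give $B$'s test cluster weight strictly larger than this budget (a constant number of coincident test points suffices). Because a $0$-error separator exists for the whole instance, any classifier that mislabels slot $i$ incurs more than $\eps|D|$ excess error and is therefore disqualified; hence every $\eps$-error classifier must classify $B$'s cluster correctly, and by construction this reveals $x_i$. Composing this with the $\Omega(n)$ indexing bound yields the claimed $\Omega(1/\eps)$.

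The main obstacle is the geometric gadget itself: exhibiting a placement in which, simultaneously, (i) for every bit-vector and every index a single line respects $A$'s fixed labeling together with $B$'s planted cluster (guaranteeing noiselessness), and (ii) the two candidate positions for slot $i$ genuinely force opposite classifications of that cluster, so that $x_i$ cannot be inferred from $A$'s neighbouring points alone. Reconciling global separability across all instances with this local bit-forcing, given that a line has only two degrees of freedom, is the delicate part; I expect it to hinge on keeping $B$'s influence localized to slot $i$ (so that $A$'s remaining points leave a free one-parameter sub-family of admissible separators at that slot) and on choosing the offset $\delta$ small relative to the slot spacing.
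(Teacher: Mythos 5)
Your reduction skeleton is exactly the paper's: reduce from indexing with $n=\Theta(1/\eps)$, note that a halfplane cannot carry the bits through labels, encode each bit in a choice between two nearby \emph{positions} for a fixed-label point on a convex arc, let $B$'s index select a local probe, and absorb the $\eps|D|=\Theta(1)$ slack by replicating points (the paper places $\eps n$ coincident copies at each location). Your calibration argument and the observation that noiselessness must hold for every (bit-vector, index) pair are both correct and match the paper.

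The one piece you defer --- the geometric gadget --- is, however, the entire substantive content, and the specific variant you sketch (a single negative point per slot at one of two radial offsets, with $B$'s positive cluster at a fixed location in that slot) does not obviously work: if the two candidate positions for $A$'s point straddle the location of $B$'s cluster, one of the two configurations destroys separability (breaking the noiseless requirement); if they do not straddle it, a single separator can be consistent with both configurations and the bit is not forced. The paper resolves exactly this tension by using a \emph{pair} of negative points per bit, angularly flanking $B$'s positive point $b^+$ (which sits just inside the circle), where the bit determines \emph{which} of the two lies just outside the circle and which just inside. A $0$-error separator then always exists --- a near-tangent line tilted toward whichever neighbor is inside --- so noiselessness holds for every configuration, but the required tilt direction (clockwise versus counter-clockwise) is opposite in the two cases, so any $\eps$-error classifier reveals the bit. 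You correctly identified that reconciling global separability with local bit-forcing is ``the delicate part''; the pair-with-swapped-inside/outside construction is the answer, and without it (or an equivalent) the proof is not complete.
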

\begin{proof}
We consider linear separators in $\mathbb{R}^2$ and suppose that points in $D_A$ and $D_B$ are distributed (almost) on the perimeter of a circle.  This generalizes to higher dimensional settings by restricting points to lie on a $2$-dimensional linear subspace.
Figure~\ref{fig:lb-linsep} shows a typical example where $D_A$ has exactly $1/\eps$ negative points around a circle (each lies almost on the circle).  These points form
$1/2\eps$ pairs of points, each close enough to each other and to the circle that they only effect points within the pair. Each pair can have two configurations:
%\vspace{-0.35cm}
\begin{packeditemize}
	\item {\bf Case 1:} left point just inside circle and right point just outside circle (red disks)
  \item {\bf Case 2:} right point just inside circle and left point just outside circle (red boxes)
\end{packeditemize}

\begin{figure*}[!htbp]
	%\vspace{-0.25cm}
	\centering
	\includegraphics[width=0.7\textwidth]{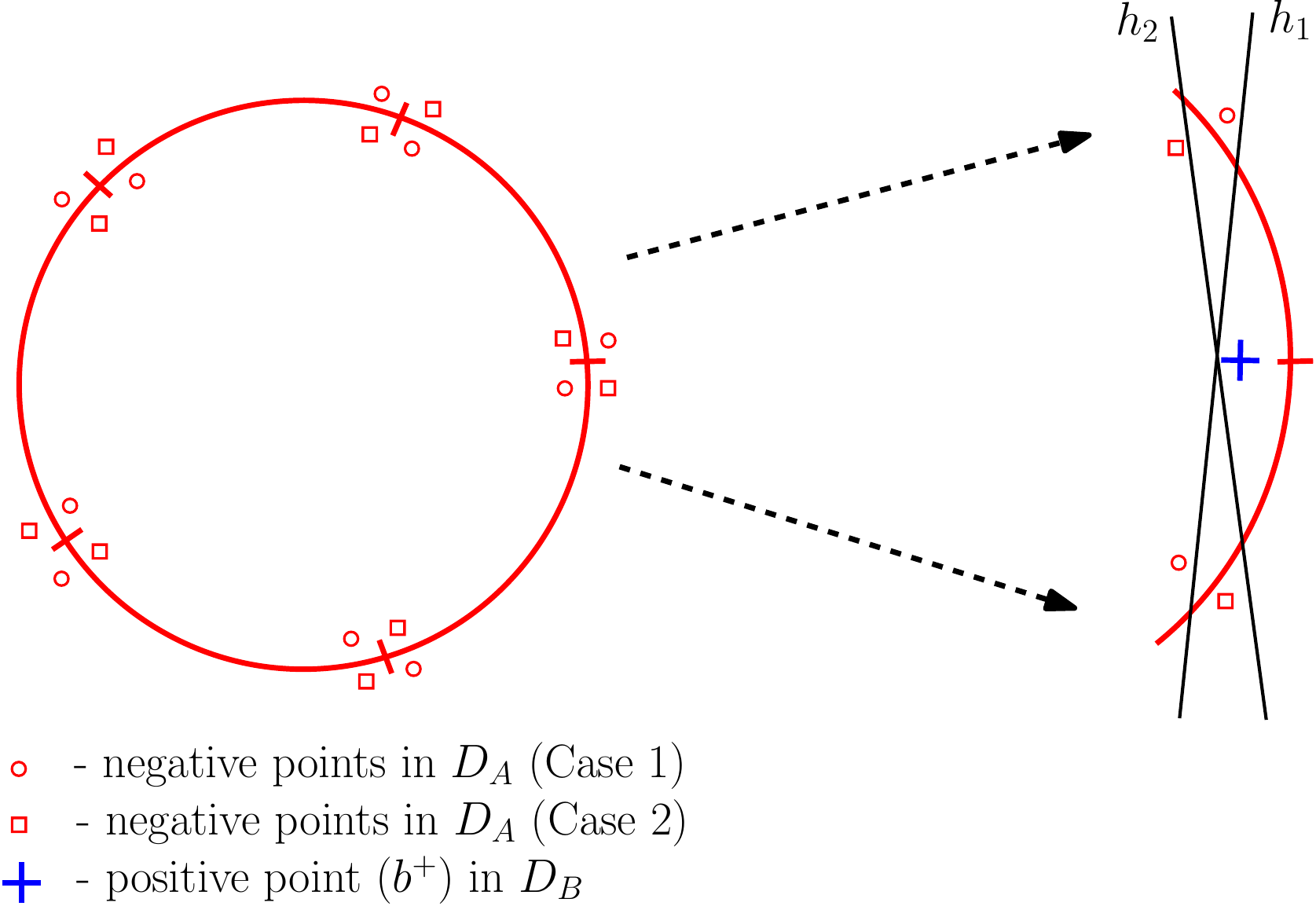}
	\caption{An example to prove the lower bound results for one-way communication with
		linear separators. The figure on the left shows the distribution of the negative points in $D_A$ for case
		1 and case 2. The right figure zooms only a small arc of the circle and shows what happens when $B$ decides
		the final classifier based on its single positive point $b^+$ and all negative points from $D_A$.}
	\label{fig:lb-linsep}
\end{figure*}

$D_B$ has only one positive point $b^+$ (blue plus) that interacts with exactly one pair of points from $D_A$, but
$B$ does not know which pair to interact with ahead of time. The positive point $b^+$ is placed close to the arc of the
circle with equal arc length to the negative points from $D_A$ on its either sides such that it is just
inside the circle. % (just off the edge between the pair of points).

\begin{claim}
Let $Z_j$ be a pair of points in $D_A$ and let $x_j$ be the position of $b^+$ (with respect to $Z_j$), as
shown in Figure~\ref{fig:lb-linsep}. If $D_B$ has a point $b^+$ at $x_j$, then $A$ needs to send at least one
bit of information about $Z_j$ to $B$, for $B$ to learn the perfect classifier.
\end{claim}
\begin{proof}
Suppose $A$ sends no information to $B$.
In order to learn an optimal classifier, $B$ makes the classifier tangent to the circle but offset to just
include its point $b^+$. However, the point $b^+$ is so positioned that it always forces the classifier
learned by $B$ to misclassify either the left negative point in $D_A$ (classifier $h_1$ in
Figure~\ref{fig:lb-linsep}, if case 1) or the right negative point in $D_A$ (classifier $h_2$ in 
Figure~\ref{fig:lb-linsep}, if case 2), whichever point is just outside the circle. 
$B$ can guess case 1 and angle the classifier to the left point, or guess case 2 and angle the classifier to the right point. 
But in either case, without any information from $A$, it will be wrong  half the time. 
%And one point misclassified by $B$ incurs an $\eps$-error overall (as we have a total of $1/\eps$ number of points in $D_A$). 
However, if $A$ sends a single bit of information denoting whether some negative point pair belongs to case 1 or case 2 then $B$ can use this information to learn a perfect separator with zero error.
\end{proof}

If we increase the number of points to $n$, by putting $\eps n$ identical points at each point in the
construction then such misclassified points cause an $\eps$ error. % and in total we have $\eps|D_A|/2$ error.

We also note that, each pair of points are independent of the others and a classifier learned for any one
negative point pair (in $D_A$) works for other negative point pairs.

In the above case, $A$ has $1/2\eps$ point pairs that are all negative.  Each pair is far enough away from all other pairs so as not to affect each other.  
$B$ has $1$ positive point placed as shown on the right of Figure \ref{fig:lb-linsep} for some point pair,
	not known to $A$.
To reduce this problem to indexing, we let each of $A$'s point pairs to correspond to one bit which is $0$ (if case 1) or $1$ (if case 2). 
And $B$ needs to determine if the $i$th bit (corresponding to the negative point pair in $D_A$ which $b^+$ needs to deal with) is $0$ or $1$. 
This requires $\Omega(1/\eps)$ one-way communication from $A$ to $B$, proving the lemma.
\end{proof}

%===========================================================================================================
\section{Lower Bounds for One-Way Noise Detection}
\label{app:lb-noisedetect}
Although in the noiseless non-agnostic setting we can guarantee to find optimal separators with one-way
communication, under the assumption they exist, we cannot detect definitively if they do exist.  For
intervals, the difficult case is when $A$ has only negative points, and for axis-aligned rectangles the
difficult case is more general.  

\begin{lemma}
It requires $\Omega(|D_A|)$ one-way communication from $A$ to $B$ to determine if there exists a perfect classifier $h \in \c{I}$.  
\end{lemma}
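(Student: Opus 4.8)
The plan is to reduce from the \emph{indexing problem} recalled at the start of the appendix, exactly as in the linear-separator lower bound. There $A$ holds $n$ bits $x_1,\dots,x_n$ and $B$ holds an index $i$, and any one-way protocol that lets $B$ recover $x_i$ needs $\Omega(n)$ bits. I would encode the bit string into $A$'s negative points and the index into $B$'s points so that a perfect interval classifier exists \emph{if and only if} $x_i$ takes one particular value; since $|D_A| = \Theta(n)$, this yields the claimed $\Omega(|D_A|)$ bound. This matches the stated difficult case in which $A$ holds only negative points.

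Concretely, I would place candidate slots at the integer positions $1,2,\dots,n$ on the line. $A$ builds $D_A$ entirely out of negative points: two fixed \emph{anchor} points at positions $1$ and $n$, together with one negative point at position $j$ for every interior index $j\in\{2,\dots,n-1\}$ with $x_j=1$. $B$, holding index $i$ (restricted to the interior, which costs only a constant factor), builds $D_B$ from two positive points at $i-\tfrac14$ and $i+\tfrac14$, straddling slot $i$. The minimal interval enclosing $B$'s two positives is $[\,i-\tfrac14,\; i+\tfrac14\,]$, whose only integer point is $i$; so in the orientation where positives lie inside the interval, a $0$-error classifier exists precisely when slot $i$ carries no negative point, i.e. exactly when $x_i=0$.

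The step I expect to be the main obstacle is ruling out the \emph{other} orientation of the interval classifier, since $\c{I}$ permits either the positives or the negatives to lie inside (cf. the ``without loss of generality'' choice in Lemma~\ref{lem:intervals}). This is exactly what the anchors are for: any interval containing \emph{all} of $A$'s negatives must contain $[1,n]$, hence both of $B$'s positive points at $i\pm\tfrac14$, so the ``negatives inside'' orientation is infeasible for every bit string and every interior query. Once that orientation is eliminated, existence of a perfect classifier is governed solely by the positives-inside case analyzed above.

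Putting the pieces together, a perfect classifier in $\c{I}$ exists if and only if $x_i=0$. Thus any one-way protocol that decides existence lets $B$ recover $x_i$: $A$ and $B$ simulate the protocol on $(D_A,D_B)$, and $B$ reports $x_i=0$ exactly when the protocol answers ``exists.'' Hence deciding existence requires $\Omega(n)$ one-way communication, and since $D_A$ consists of at most $n$ points, this is $\Omega(|D_A|)$, as claimed.
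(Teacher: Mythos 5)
Your proposal is correct and follows essentially the same route as the paper: a reduction from the indexing problem in which $A$'s negative points encode the bit string at integer slots and $B$'s two positive points straddle the queried slot, so that a perfect interval exists iff the queried bit is $0$. The only cosmetic difference is how the ``negatives inside'' orientation is excluded --- you use two anchor negatives in $D_A$, while the paper instead gives $B$ additional negative points on both sides of its positives; both devices work.
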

\begin{proof}
Consider the case where $A$ has $n/2$ points and they are all negative.  All of its points have values in $[2n]$ and are even.  $B$ has $2$ positive points and $n/2-2$ negative, its points have values in $[2n+1]$ and are all odd.  Its two positive points are consecutive odd points, say $2i-1$ and $2i+1$.  If $A$ has a point at index $2i$, then there is no perfect classifier, if it does not, then there is.  

This is precisely the indexing problem with $A$'s points corresponding to a $1$ if they exist for index $2i$ and to a $0$ if they do not, and for $B$'s index $i$ corresponding to the value $i$ for which it has positive points at $2i-1$ and $2i+1$.  Thus, it requires $\Omega(|D_A|)$ one-way communication, proving the lemma.  
\end{proof}

\begin{lemma}
It requires $\Omega(|D_A|)$ one-way communication from $A$ to $B$ to determine if there exists a perfect classifier $h \in \c{R}_2$, even if $A$ and $B$ have positive and negative points.  
\end{lemma}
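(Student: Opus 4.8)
The plan is to reduce from the indexing problem, exactly as in the preceding lower bounds. The useful structural fact for this hypothesis class is the following: if we fix the convention that positive points lie inside the rectangle, then a $0$-error classifier exists if and only if the minimal axis-aligned bounding box of the positive points contains no negative point. (Enlarging a box that already contains all positives can only trap additional negatives, so the minimal enclosing box of the positives is the unique candidate worth testing.) The symmetric statement holds when negatives are required to lie inside. Hence detecting whether a perfect $h \in \c{R}_2$ exists is equivalent to checking whether at least one of these two bounding boxes is free of the opposite class, and I would design the reduction so that exactly this test encodes a single bit of $A$'s input.

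Concretely, I would give $A$'s $n$-bit string a geometric encoding and hand $B$ the index $i$. The central difficulty is that the bounding box of the positives is a single \emph{global} object, so naively spreading $A$'s gadgets along a line makes separability depend on the whole bit pattern rather than one bit. To localize, I would place the $n$ gadgets in disjoint horizontal lanes, using the second coordinate purely to keep them from interfering. Fixed anchor points, held by both players and of both signs, would pin the vertical extent and one horizontal side of the candidate box so that, once $B$ reveals its index, the forced box meets only lane $i$ nontrivially. Within lane $i$, $A$'s gadget either does or does not place a point of the trapped class inside the forced box, according to $b_i$, while every other lane is arranged to be trivially clean regardless of $A$'s bits. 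With the anchors distributed this way both $A$ and $B$ hold positive as well as negative points, matching the statement. A perfect classifier then exists iff $b_i = 0$, so a protocol deciding existence with $o(|D_A|)$ communication would solve indexing with $o(n)$ communication, contradicting the $\Omega(n)$ bound.

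The main obstacle is robustness against the two extra degrees of freedom that rectangles enjoy over intervals. First, the classifier may place either class inside, so I must rule out that the opposite orientation rescues separability when $b_i = 1$; I would handle this by seeding negatives so that the negatives' bounding box is globally ``dirty'' for every bit string, leaving the positives-inside test as the only live option. Second, because the bounding box is global, I must verify that $A$'s bits in lanes $j \ne i$ genuinely cannot perturb the box that interacts with lane $i$ — this is precisely what the lane separation in the second coordinate buys, and confirming it is the careful (though routine) part of the argument. Once both points are checked, the reduction to indexing is immediate and yields the claimed $\Omega(|D_A|)$ lower bound.
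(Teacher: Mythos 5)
Your high-level plan --- reduce from indexing, use shared anchors of both signs, encode $A$'s bits as presence/absence of points, let $B$ contribute a single index point, and observe that it suffices to test the two minimal bounding boxes --- is exactly the paper's route. But the localization mechanism you propose does not work as stated. You put the $n$ gadgets in disjoint horizontal lanes, let fixed anchors pin the vertical extent and one horizontal side of the positives' box, and let $B$'s index point determine the remaining side. The candidate box then has the form $[0,x_i]\times[Y_{\min},Y_{\max}]$ with $[Y_{\min},Y_{\max}]$ covering every lane, so whether lane $j$'s dirty point $(f(j),y_j)$ is trapped depends only on whether $f(j)\le x_i$; the separation in the second coordinate buys you nothing. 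The reduction needs $f(i)\le x_i$ and $f(j)>x_i$ for all $j\ne i$, simultaneously for every $i$. Taking $i_1\ne i_2$ gives $f(i_2)>x_{i_1}\ge f(i_1)$ and $f(i_1)>x_{i_2}\ge f(i_2)$, a contradiction, so ``every other lane is trivially clean regardless of $A$'s bits'' cannot be arranged under your pinning scheme.

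The missing idea is that $B$'s point must determine a \emph{corner} of the box (two sides, not one), which forces the gadgets to be staggered monotonically in \emph{both} coordinates --- a staircase, not lanes. The paper does precisely this: a shared positive anchor at $(2n,0)$, a shared negative anchor at $(0,2n)$, bit $j$ of $A$ encoded by a negative point at $(2j,2j)$, and $B$'s index point at $(2i-1,2i+1)$. The positives' box is then $[2i-1,2n]\times[0,2i+1]$; diagonal points with $j<i$ are excluded by the $x$-constraint and those with $j>i$ by the $y$-constraint, so exactly bit $i$ is probed. The diagonal also disposes of your second concern without extra seeding: when $b_i=1$ the negatives' box spans $(0,2n)$ and $(2i,2i)$ and therefore contains $(2i-1,2i+1)$, so the negatives-inside orientation fails too. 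Your checklist of what must be verified is correct; the construction you sketch to satisfy it is not, and repairing it requires the staircase arrangement rather than a routine verification.
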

\begin{proof}
Let $A$ and $B$ both have a positive point at $(2n,0)$ and a negative point at $(0,2n)$.  
$A$ also has a set of $n/2-2$ negative points at locations $(2i,2i)$ for some distinct values of $i \in [n]$.  $B$ has a (variable) positive point at some location $(2i-1,2i+1)$ for $i \in [n]$.  
There exists a perfect classifier $h \in \c{R}_2$ if and only if $A$ has no point at $(2i,2i)$ where $i$ is the index of $B$'s variable point.  

Again, this is precisely the indexing problem.  $A$'s points along the diagonal correspond to $n$ bits being $1$ if a point exists and $0$ if not for each index $i$.  And $B$'s index corresponds to the value $i$ of its variable point.  Thus, it requires $\Omega(|D_A|)$ one-way communication, proving the lemma.  
\end{proof}

%\input{new-lin-sep-proof}
%\input{new-lin-sep-highd-proof}

%===========================================================================================================
\bibliographystyle{icml2012}
\bibliography{ml}

\end{document}